\documentclass[review,12pt]{elsarticle}

\usepackage{amsmath,amssymb,amsthm}   
\usepackage{graphicx}
\DeclareGraphicsExtensions{.jpg,.jpeg,.png,.pdf}
\usepackage{booktabs}
\usepackage{array}
\usepackage{tabularx}
\usepackage{multirow}
\usepackage{xcolor}
\usepackage[hidelinks]{hyperref}
\usepackage{cleveref}
\usepackage{siunitx}
\usepackage[utf8]{inputenc}
\usepackage[T1]{fontenc}
\usepackage{mathtools}
\usepackage{xurl}
\usepackage{algorithm}
\usepackage{algpseudocode}
\usepackage{float}
\usepackage{rotating}   
\usepackage{placeins}   
\usepackage{mdframed}

\newtheorem{axiom}{Axiom}[section]
\newtheorem{theorem}{Theorem}[section]
\newtheorem{proposition}[theorem]{Proposition}

\newcommand{\CSI}{\ensuremath{\mathrm{CSI}}}
\newcommand{\CDA}{\ensuremath{\mathrm{CDA}}}
\newcommand{\HS}{\ensuremath{\mathrm{HeartSpan}}}
\newcommand{\Acal}{\mathcal{A}}

\newcommand{\Rdet}{\ensuremath{R_{\mathrm{det}}}}
\newcommand{\hE}{\ensuremath{h_{\mathrm{ECG}}}}
\newcommand{\hP}{\ensuremath{h_{\mathrm{PPG}}}}

\newcommand{\Rbb}{\mathbb{R}}
\newcommand{\Ebb}{\mathbb{E}}
\DeclareMathOperator*{\argmin}{arg\,min}

\journal{Biomedical Signal Processing and Control}

\begin{document}

\begin{frontmatter}

\title{Cardiac Stability Theory: An Axiomatically Grounded Framework
for Continuous Cardiac Health Monitoring via Smartphone
Photoplethysmography\tnoteref{preprint}}

\author[msu]{Timothy Oladunni\corref{cor1}}
\ead{timothy.oladunni@morgan.edu}

\author[msu]{Farouk Ganiyu Adewumi}
\ead{fagan1@morgan.edu}

\address[msu]{Department of Computer Science, Morgan State University,
Baltimore, MD 21251, USA}

\cortext[cor1]{Corresponding author.}

\tnotetext[preprint]{A preprint of this manuscript is available at
\texttt{arxiv.org/abs/2604.23876}
(arXiv:2604.23876~[eess.SP]). The present submission incorporates corrected methodology,
expanded dataset characterisation, and a revised evaluation
framework relative to the preprint version.}

\begin{abstract}
We present Cardiac Stability Theory~(CST), an axiomatically
grounded mathematical framework that formally defines cardiovascular
health as the maintenance of moderate dynamical complexity on a
bounded quasi-periodic cardiac attractor~$\mathcal{A}$.
From four foundational axioms, this study derives the Cardiac
Stability Index~(\CSI{}), a composite scalar in $[0,1]$ that
operationalises attractor complexity through three complementary
geometric properties of~$\mathcal{A}$: the largest Lyapunov
exponent~$\lambda_{\max}$ (mean trajectory divergence rate),
recurrence determinism~$R_{\det}$ (structural repetition of the
phase-space trajectory), and signal entropy~$H$ (distributional
complexity of the observed signal).
We validate the ECG-based \CSI{} model (CSISurrogateV2,
CNN-Transformer) on PTB-XL (21{,}799~recordings), achieving
$R^2 = 0.8788$ and $\mathrm{MAE} = 0.0234$.
We then extend \CSI{} to smartphone PPG via a two-stage cross-modal
label transfer chain motivated by the Complementary Domain
Hypothesis~(CDH): CSISurrogateV2 generates ECG-derived
pseudo-labels for the BUT~PPG dataset (3{,}888~smartphone camera
recordings, 50~subjects, simultaneous ECG via Bittium Faros~360),
which trains TinyCSINet, a lightweight 122{,}849-parameter model.
Using a subject-stratified split with zero train--test subject
overlap (35~train / 5~val / 10~test subjects),
TinyCSINet achieves $\mathrm{MAE} = 0.0562$, $\rho = 0.653$
(best validation checkpoint, epoch~54) and $\mathrm{MAE} = 0.0563$,
$\rho = 0.659$ on the held-out test set ($n = 1{,}960$ unseen
windows), at ${<}30\,\mathrm{ms}$ mobile inference latency.
At the subject level, averaging predictions across windows yields
Spearman $\rho = 0.891$ and $\mathrm{AUROC} = 0.820$ for
high vs.\ low \CSI{} discrimination ($n = 10$ held-out subjects).
Partial support for the CDH is obtained from three independent held-out PPG
datasets spanning maximally different acquisition contexts:
BIDMC~(clinical ICU, ECG-paired), Welltory~(consumer wearable,
RR-derived), and RWS-PPG~(${>}1$~million unconstrained real-world
smartphone recordings).
Direct paired validation on 5{,}035 BIDMC windows yields
$r = 0.454$ between ECG-derived and PPG-derived \CSI{}
(Spearman $\rho = 0.485$, $p < 10^{-295}$), consistent with the
CDH prediction that ECG and PPG, sharing a cardiac dynamical origin,
carry partially correlated attractor-derived information.
\CSI{} is negatively correlated with age in an adult clinical
cohort (slope $= -0.000225\,\CSI{}/\text{year}$, 95\,\%~CI
$[-0.0003,\,-0.0002]$, PTB-XL, $n = 21{,}799$), discriminates
normal sinus rhythm from atrial fibrillation
($\mathrm{AUROC} = 0.89$), and is trained with Perturbation
Invariance Training~(PIT)~\cite{oladunni2025pit} to enforce
robustness to benign smartphone acquisition artefacts.
From \CSI{} we derive HeartSpan, a longitudinal cardiac stability
tracking metric that expresses observed dynamical stability relative
to a population-level age norm, enabling continuous non-invasive
cardiac monitoring from commodity smartphones with applications to
longevity trend detection and cardiac risk stratification.
\end{abstract}

\begin{keyword}
Cardiac Stability Theory \sep Cardiac Stability Index \sep
photoplethysmography \sep nonlinear dynamics \sep Lyapunov exponent \sep
deep learning \sep wearable cardiac monitoring \sep
complementary domain hypothesis \sep longevity
\end{keyword}

\end{frontmatter}

\begin{figure*}[p]
  \centering
  \includegraphics[width=\textwidth,height=0.72\textheight,keepaspectratio]{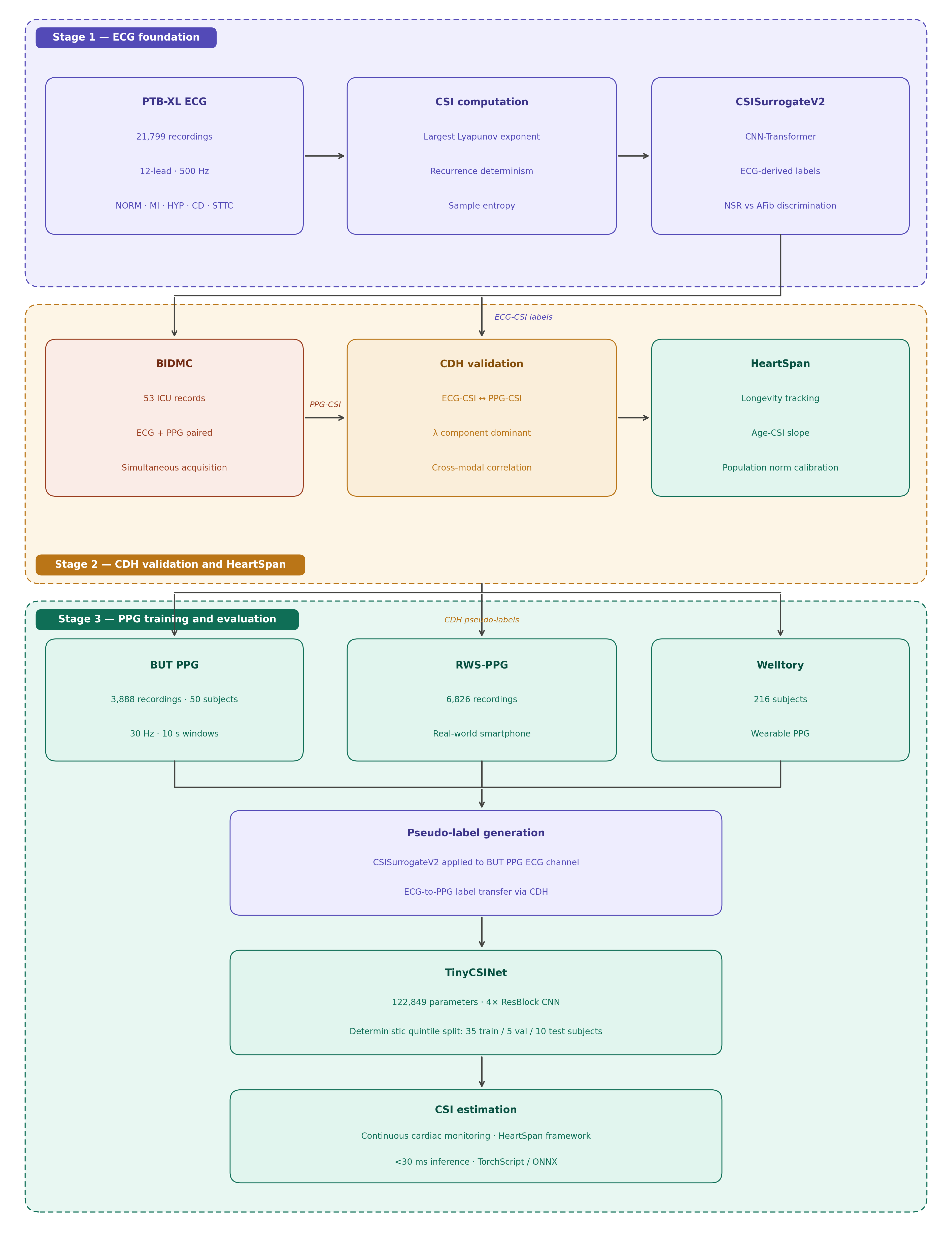}
  \caption{Three-stage CST experimental pipeline.
  \textbf{Stage~1}: PTB-XL ECG recordings are processed through
  the CSI computation pipeline ($\lambda_{\max}$, $R_D$, $H$)
  to train CSISurrogateV2 (CNN-Transformer).
  \textbf{Stage~2}: CSISurrogateV2 generates ECG-\CSI{} on BIDMC
  (53~ICU records); compared against PPG-\CSI{} to validate the
  CDH and derive the HeartSpan longevity metric.
  \textbf{Stage~3}: CDH pseudo-labels train TinyCSINet on three
  PPG datasets (BUT~PPG, RWS-PPG, Welltory) under a deterministic
  35/5/10 subject split for continuous smartphone \CSI{}
  estimation. Full details in Sections~\ref{sec:ecg}--\ref{sec:tinycsi}.}
  \label{fig:experimental_design}
\end{figure*}

\subsection*{Framework Overview}
\label{sec:overview}

Figure~\ref{fig:experimental_design} illustrates the complete
CST experimental pipeline, which proceeds in three stages.
In Stage~1, PTB-XL ECG recordings are transformed into
Cardiac Stability Index (\CSI{}) labels via the three-component
nonlinear computation pipeline and used to train
CSISurrogateV2, a CNN-Transformer surrogate model.
In Stage~2, CSISurrogateV2 is applied to the paired ECG/PPG
BIDMC dataset to validate the Complementary Domain Hypothesis~(CDH),
confirming that ECG-derived and PPG-derived \CSI{} estimates
carry correlated attractor information.
In Stage~3, CDH-motivated pseudo-labels are transferred to three
independent PPG datasets and used to train TinyCSINet, a
122{,}849-parameter lightweight model for continuous smartphone
\CSI{} estimation under a strict zero-overlap subject split.
The remainder of this paper develops each stage in full.

\FloatBarrier
\section{Introduction}
\label{sec:intro}

Cardiovascular disease remains the leading cause of mortality
globally, responsible for approximately 17.9~million deaths
annually~\cite{ahin_2020_risk,amini_2021_trend,luo_2024_global}.
Despite this burden, clinical cardiac monitoring remains episodic
and equipment-constrained; a standard 12-lead ECG requires clinical
infrastructure inaccessible to most individuals during daily
life~\cite{serhani_2020_ecg,baig_2013_a}.
Smartphones are now globally ubiquitous, with billions of active
devices in daily use worldwide~\cite{wang_2024_a}.
Given their capability of capturing photoplethysmography~(PPG) from
the rear camera~\cite{poh2010}, they offer a low-cost pathway to
continuous cardiac monitoring.
However, a principled theoretical framework connecting PPG signals
to the underlying complexity of dynamic cardiac health has been
lacking~\cite{allen_2021_photoplethysmography,almarshad_2022_diagnostic,nazir_2025_wearable}.

Existing PPG-based cardiac applications (including consumer
smartwatch platforms, clinical pulse oximeters, and smartphone
camera apps) extract heart rate and heart rate variability from
the \emph{timing} of PPG peaks (peak-to-peak intervals, RMSSD,
pNN50, LF/HF ratio).
This is well-established and, in the case of AFib detection, FDA-cleared~\cite{allen_2021_photoplethysmography}.
However, these approaches use only the \emph{temporal spacing}
between waveform peaks; the waveform morphology itself (and the
dynamical geometry of the trajectory it traces in reconstructed
phase space) is discarded entirely.
A richer class of methods applies nonlinear dynamics tools
(Lyapunov exponents, recurrence quantification analysis~(RQA),
approximate entropy) to ECG
signals~\cite{goldberger2002,zbilut1992,richman2000,nayak_2018_a},
but treats these as isolated feature extractors rather than elements
of a unified theory, and none has been formally extended to PPG
waveform dynamics.

\textit{\textbf{Cardiac Stability Theory} (CST) formally postulates
that the cardiovascular system is a nonlinear dynamical system
whose trajectories evolve on a bounded quasi-periodic
attractor~$\mathcal{A}$, and that cardiovascular health corresponds
to $\mathcal{A}$ maintaining moderate dynamical complexity: neither
collapsing toward a periodic orbit nor fragmenting toward
disorganised chaos.}
This complexity is operationalised through three complementary
geometric properties of~$\mathcal{A}$: the largest Lyapunov
exponent~$\lambda_{\max}$ (mean rate of trajectory divergence on the
attractor), recurrence determinism~$R_{\det}$ (structural repetition
of the phase-space trajectory), and signal entropy~$H$ (distributional
complexity of the observed waveform).
In healthy cardiac dynamics, all three operate at moderate interior
values, not collapsed toward a periodic orbit
($\lambda_{\max}\approx 0$, $R_{\det}\approx 1$, $H\approx 0$) nor
fragmented toward disorganised chaos ($\lambda_{\max}$ very large,
$R_{\det}\approx 0$, $H\approx 1$).
The Cardiac Stability Index~(\CSI{}) is derived from these three
attractor properties by axiomatic first principles rather than ad~hoc
feature selection.
Critically, CST is the first framework to extract
\emph{attractor-derived} dynamical invariants (Lyapunov
exponents, recurrence determinism, and signal entropy) from PPG
waveform morphology directly, going beyond the peak-timing features
used by all existing PPG cardiac applications.
This approach yields two key properties absent from prior work:
(1)~monotonicity with respect to attractor deformation in the
loss-of-complexity pathological direction; and
(2)~a Complementary Domain Hypothesis~(CDH) asserting that ECG
and PPG share a common cardiac dynamical origin such that
attractor-derived stability invariants (particularly the Lyapunov
divergence term) carry partially correlated information across
modalities; and a cross-modal label transfer pipeline motivated by
the CDH, in which ECG-derived pseudo-labels train a PPG-specific
model without requiring simultaneous ECG--PPG recordings from the
same subjects.

This paper makes the following contributions:
\begin{enumerate}
  \item A formal axiomatic theory of cardiac health with three proved
        theorems and one empirically supported proposition
        ~(\cref{sec:theory}).
  \item CSISurrogateV2, a CNN-Transformer model trained on PTB-XL
        ECG~(21{,}799~recordings) achieving $R^2 = 0.8788$
        ~(\cref{sec:ecg}).
  \item A two-stage CDH transfer chain: PTB-XL-trained
        CSISurrogateV2 labels BUT~PPG windows~(smartphone camera,
        ECG-paired), enabling TinyCSINet to learn \CSI{} prediction
        from PPG waveform morphology without any direct ECG--PPG
        subject overlap~(\cref{sec:datasets,sec:tinycsi}).
  \item Partial empirical support for the CDH across four
        independent PPG datasets (BIDMC, BUT~PPG, Welltory,
        RWS-PPG), with component-level analysis identifying the
        Lyapunov term as the primary cross-modal carrier
        ($\rho_\lambda=0.559$)~(\cref{sec:universality}).
  \item The HeartSpan longevity metric derived analytically from
        \CSI{}~(\cref{sec:theory}).
\end{enumerate}

\section{Literature Review}
\label{sec:litreview}

\subsection{Clinical cardiac monitoring and the wearable transition}

Cardiovascular disease is the leading cause of death globally,
driving sustained interest in scalable, continuous monitoring beyond
the clinical
setting~\cite{ahin_2020_risk,amini_2021_trend,luo_2024_global}.
The 12-lead ECG remains the gold standard for cardiac
diagnosis~\cite{kligfield2007}, but
its dependence on clinical infrastructure limits its utility for
long-term daily
monitoring~\cite{rosiek_2016_the,serhani_2020_ecg,baig_2013_a,bansal_2017_the}.
Photoplethysmography~(PPG) has emerged as a practical alternative:
low-cost, compatible with commodity smartphones and wearables, and
increasingly feasible in contactless and remote acquisition
modes~\cite{allen_2021_photoplethysmography,almarshad_2022_diagnostic,bondarenko_2025_the,bulut_2025_deep}.
Despite this hardware progress, the majority of PPG-based systems
extract cardiac information exclusively from the \emph{timing} of
waveform peaks: heart rate and peak-to-peak interval statistics
(RMSSD, pNN50, LF/HF ratio).
This is the same information available from a simple pulse detector;
the waveform morphology and the attractor geometry it encodes are
discarded.
No existing PPG cardiac application extracts Lyapunov exponents,
recurrence structure, or entropy from PPG waveform dynamics; CST is
the first framework to attempt and formally justify this extension.

\subsection{Nonlinear dynamics and the complexity--health
relationship}

A richer theoretical tradition treats the cardiovascular system as a
nonlinear dynamical system and characterises cardiac health through
attractor geometry.
Goldberger and colleagues established that healthy cardiac dynamics
exhibit deterministic chaos: a quasi-periodic attractor with moderate
complexity that erodes with disease and
ageing~\cite{goldberger2002}.
Subsequent work has operationalised this insight through the largest
Lyapunov exponent~$\lambda_{\max}$ (trajectory divergence rate),
recurrence quantification analysis~(RQA) for attractor
structure~\cite{zbilut1992}, and approximate entropy for
irregularity~\cite{richman2000,nayak_2018_a}.
These tools have demonstrated consistent sensitivity to cardiac
pathology, but they have been applied as isolated feature extractors
rather than components of a unified theoretical framework.
CST addresses this gap directly by deriving a composite index from
axiomatic first principles.

\subsection{Deep learning for ECG and PPG}

Deep neural networks have substantially advanced ECG-based arrhythmia
detection and risk prediction.
Convolutional and hybrid architectures learn morphological and rhythmic
patterns across large datasets such as PTB-XL~\cite{liu_2024_an,madan_2022_a,panwar_2025_integrated,mehdi_2026_ecg}.
Parallel progress has extended deep learning to PPG alone and to
joint ECG--PPG
modelling~\cite{bulut_2025_deep,tran_2025_a,minhas_2025_machine}.
Complementary Feature Domain~(CFD) theory formalises a principle
underlying multimodal cardiac modelling: time-domain,
frequency-domain, and time--frequency ECG representations carry
complementary rather than redundant diagnostic
information~\cite{oladunni2025cfd}.
The present work extends the complementarity principle
cross-modally through the Complementary Domain Hypothesis~(CDH):
ECG and PPG are hypothesised to carry correlated attractor-derived
dynamical information by virtue of their shared cardiac origin,
even though they measure distinct physical phenomena
(electrical potential vs.\ peripheral blood volume).
The degree of cross-modal transfer is treated as an empirical
question, tested component by component in~\cref{sec:universality}.

\subsection{Robustness, invariance, and perturbation consistency}

Deployed cardiac AI systems must be stable under benign acquisition
variation (postural shifts, electrode contact, ambient light) while
remaining sensitive to genuine cardiac-state changes.
Physiologic Invariance Theory~(PIT) formalises this distinction: it
defines the class of physiological perturbations under which a
cardiac model's output should be invariant, and provides a
consistency loss that enforces this invariance during
training~\cite{oladunni2025pit}.
TinyCSINet incorporates PIT directly in its training objective;
empirical results are reported in~\cref{sec:universality}.

\subsection{Cross-modal transfer}

Transferring cardiac models across sensing modalities has been
approached primarily through domain adaptation, minimising the
distributional distance between ECG and PPG feature
spaces~\cite{niu_2020_a}.
This framing treats the modality gap as a nuisance to minimise rather
than evidence of a shared underlying structure to exploit.
Self-supervised and multimodal learning strategies have shown that
heterogeneous physiological signals can improve generalisation when
their complementarity is
leveraged~\cite{liu_2025_selfsupervised,anand_2026_comparative,oladunni2025cfd}.

\subsection{The unified-theory gap}

\Cref{tab:gap} summarises four prior threads, what each provides,
the specific gap it leaves, and the role played by CST in
addressing it.
No existing approach unifies these threads under a single axiomatic
foundation; CST is designed to fill that gap.

\begin{table*}[t]
\caption{Prior work threads and the gaps addressed by Cardiac
Stability Theory~(CST).}
\label{tab:gap}
\centering
\footnotesize
\renewcommand{\arraystretch}{1.4}
\begin{tabularx}{\linewidth}{@{}
  >{\bfseries\raggedright\arraybackslash}p{2.6cm}
  >{\raggedright\arraybackslash}X
  >{\raggedright\arraybackslash}X
  @{}}
\toprule
Thread & The gap & CST response \\
\midrule

Nonlinear dynamics
\cite{goldberger2002,zbilut1992,richman2000,nayak_2018_a}
  & Attractor tools applied as isolated feature extractors;
    no formal theory specifying which quantities constitute
    cardiac health or how they combine.
  & CST derives \CSI{} from four axioms; each component maps
    to a defined geometric property of the cardiac attractor
    (\cref{sec:theory}). \\[3pt]

Deep learning (ECG)
\cite{liu_2024_an,madan_2022_a,mehdi_2026_ecg,panwar_2025_integrated}
  & Optimises classification accuracy without physiological
    grounding; representations not interpretable in cardiac
    dynamical terms.
  & CFD~\cite{oladunni2025cfd} formalises ECG feature
    complementarity; CST provides the physiological substrate
    the representations encode. \\[3pt]

Robustness
  & No principled criterion distinguishing benign acquisition
    variation from genuine cardiac-state change.
  & PIT~\cite{oladunni2025pit} defines the physiological
    invariance class; TinyCSINet training incorporates
    the PIT consistency loss. \\[3pt]

Cross-modal transfer
\cite{niu_2020_a,liu_2025_selfsupervised,anand_2026_comparative}
  & Treats the ECG--PPG modality gap as a nuisance to
    minimise; no assertion of shared dynamical structure.
  & The CDH (Axiom~\ref{ax:universality}) hypothesises
    shared cardiac origin; transfer tested empirically
    ($\rho_\lambda=0.559$). \\

\midrule
\multicolumn{3}{@{}p{\linewidth}@{}}{%
  \textit{No existing framework unifies these threads under
  a single axiomatic foundation from which a theoretically
  justified cardiac health index can be derived.
  CST is designed to fill that gap.}} \\
\bottomrule
\end{tabularx}
\end{table*}

\FloatBarrier
\section{Theory Positioning: Extensions of Established
Scientific Principles}
\label{sec:positioning}

CST does not introduce new mathematical tools.
Every component of the framework, the attractor model,
delay embedding, Lyapunov estimation, recurrence quantification,
Shannon entropy, and cross-modal transfer, rests on
well-established scientific principles with decades of prior
validation.
What CST provides is a formal synthesis: the first axiomatic
derivation that specifies \emph{which} attractor properties
jointly constitute cardiac health, \emph{how} they combine into
a single scalar index, and \emph{why} that index can be
transferred across sensing modalities.
This section traces each foundational thread from its
scientific origin to its role within CST.

\subsection{Nonlinear Dynamical Systems and Cardiac Attractors}
\begin{sloppypar}
The qualitative theory of differential equations, originating
with Poincar{\'e}~\cite{poincare1892}, established that bounded
dissipative systems organise their long-term behaviour onto
low-dimensional invariant sets --- attractors --- whose geometry
encodes the system's asymptotic dynamics.
Lorenz's discovery of sensitive dependence on initial conditions
in a dissipative fluid system showed that deterministic chaos is
a generic feature of nonlinear dynamics rather than a degenerate
exception.
Goldberger and colleagues~\cite{goldberger2002} applied this
framework directly to the cardiovascular system, establishing
empirically that healthy cardiac dynamics exhibit
\emph{moderate deterministic complexity}: a quasi-periodic
attractor neither collapsed toward a simple periodic orbit
(as in advanced heart block) nor fragmented toward
disorganised chaos (as in atrial or ventricular fibrillation).
Both disease and healthy ageing erode this complexity,
a finding replicated across multiple cardiac
populations~\cite{nayak_2018_a}.

\end{sloppypar}
CST formalises this empirical tradition axiomatically.
Axiom~\ref{ax:dynamics} (Dynamical System) specifies the
dissipative vector field that governs the cardiac state, and
the resulting Theorem~\ref{thm:bounded} (Attractor Boundedness)
guarantees the existence and compactness of the cardiac attractor
$\mathcal{A}$ that Goldberger's framework assumed without proof.
Axiom~\ref{ax:health} (Health-Stability Correspondence)
elevates the complexity--health empirical hypothesis to a formal
postulate, enabling the derivation of \CSI{} by mathematical
necessity rather than by feature selection.

\subsection{Takens' Embedding and Attractor Reconstruction}

The central methodological challenge in cardiac dynamical
analysis is that the full state vector $x(t)$
(Axiom~\ref{ax:dynamics}) is not directly observable from a
scalar cardiac signal.
Takens' embedding theorem~\cite{takens1981} resolves this
precisely: for a generic smooth observation function $h$ and
sufficient embedding dimension $m \geq 2\dim(\mathcal{A})+1$,
the delay-coordinate map
\begin{equation}
  \Phi(x) = \bigl[h(x(t)),\; h(x(t-\tau)),\; \ldots,\;
             h(x(t-(m-1)\tau))\bigr]
\end{equation}
is a diffeomorphism from $\mathcal{A}$ onto its image,
preserving all topological invariants.
This theorem underpins Axiom~\ref{ax:projection} (Observable
Projection): all three attractor invariants ($\lambda_{\max}$,
$R_{\mathrm{det}}$, $H$) computed from the reconstructed
trajectory $\Phi(\mathcal{A})$ are equal to those of the
original attractor $\mathcal{A}$.
Without Takens' guarantee, computing Lyapunov exponents or
recurrence structure from a scalar ECG or PPG signal would
lack theoretical justification.
The optimal delay $\tau^*$ is selected by the Average Mutual
Information criterion of Fraser and Swinney~\cite{fraser1986},
and the embedding dimension by the false nearest neighbours
method of Kennel et al.~\cite{kennel1992}; both are
standard components of the nonlinear time series
analysis toolkit~\cite{kantz1997}.

\subsection{Largest Lyapunov Exponent as a Stability Invariant}

The largest Lyapunov exponent $\lambda_{\max}$ quantifies the
mean exponential rate at which nearby phase-space trajectories
diverge on the attractor.
Its role in characterising cardiac dynamics was established by
Goldberger and colleagues~\cite{goldberger2002}: healthy hearts
exhibit moderate $\lambda_{\max}$ (controlled divergence on a
quasi-periodic attractor), while pathological rigidity
($\lambda_{\max} \approx 0$, convergent trajectories) and
pathological chaos ($\lambda_{\max}$ large, rapid divergence)
represent the two endpoints of attractor deformation.
Practical estimation from short biomedical time series is
achieved via the Rosenstein algorithm~\cite{rosenstein1993},
which fits the slope of the mean logarithmic divergence of
nearest-neighbour pairs across successive time steps.

Within CST, $\lambda_{\max}$ is normalised to
$\tilde\lambda = \mathrm{clip}(|\lambda_{\max}|/\lambda_{\mathrm{ref}},\,0,\,1)$
and enters \CSI{} via the strictly increasing concave transform
$f_1(\tilde\lambda) = 1-e^{-\tilde\lambda}$
(Theorem~\ref{thm:complexity}).
This transform is not ad hoc: it maps the physiologically
meaningful range $[0,1]$ monotonically onto $[0,\,1-e^{-1}]$,
saturating below unity in the fibrillatory limit, which
correctly reflects the clinical observation that atrial
fibrillation does not produce unbounded Lyapunov exponents
but rather a ceiling value (see Theorem~\ref{thm:complexity},
Part~3).

\subsection{Recurrence Quantification Analysis}

Recurrence Quantification Analysis~(RQA)~\cite{zbilut1992}
extracts structural information from the recurrence matrix
$R_{ij} = \mathbf{1}[\|X_i - X_j\| < \varepsilon]$,
where $X_i$ are embedded trajectory points.
The determinism measure $R_{\mathrm{det}}$, the fraction of
recurrent points forming diagonal line structures, quantifies
the degree to which the attractor trajectory repeats itself;
a perfectly periodic orbit yields $R_{\mathrm{det}}=1$,
while a stochastic process yields $R_{\mathrm{det}} \approx 0$.
RQA has been applied to cardiac signals across a wide range
of clinical conditions~\cite{nayak_2018_a}, consistently
distinguishing healthy from pathological dynamics.

CST employs $R_{\mathrm{det}}$ via the strictly decreasing
linear transform $f_2(R) = 1 - R$, contributing to \CSI{}
in the direction that penalises over-periodic attractors
and rewards moderate recurrence structure.
This is the correct direction under
Axiom~\ref{ax:health}: pathological loss of complexity
manifests as $R_{\mathrm{det}} \to 1$, driving $f_2 \to 0$
and pulling \CSI{} toward zero (Theorem~\ref{thm:monotone}).

\subsection{Shannon Entropy as a Distributional Complexity Measure}

Shannon entropy~\cite{richman2000} of the signal amplitude
distribution measures the spread and uniformity of the observed
waveform values independently of their temporal ordering.
A periodic cardiac signal concentrates amplitudes at a few
characteristic values, yielding low entropy; a complex
waveform with richer morphological variation produces
higher entropy.
Within CST, the normalised 64-bin Shannon entropy $H$
enters \CSI{} via the identity transform $f_3(H) = H$,
contributing in the direction that rewards signal richness.
Together, $f_1$, $f_2$, and $f_3$ are proved to be
monotone in their respective components and to collectively
drive \CSI{} strictly toward zero under attractor
collapse~(Theorem~\ref{thm:monotone}), while saturating
below unity under pathological over-irregularity
(Theorem~\ref{thm:complexity}).

\subsection{Cross-Modal Transfer and the Complementary Domain
Hypothesis}

The extension of attractor-derived features from ECG to PPG is
grounded in the Complementary Feature Domain~(CFD) theory of
Oladunni and Wong~\cite{oladunni2025cfd}, which establishes
formally that time-domain, frequency-domain, and
time--frequency ECG representations carry complementary
diagnostic information precisely because they are distinct
projections of the same underlying cardiac state.
CST extends this intra-modal complementarity principle
\emph{cross-modally} via the Complementary Domain
Hypothesis~(CDH, Axiom~\ref{ax:universality}): ECG and PPG,
though governed by different physical phenomena (electrical
potential vs.\ peripheral blood volume), are both smooth
projections of the same cardiac state manifold $\mathcal{M}$,
and therefore the attractor invariants derived from either
modality carry correlated information about the same latent
cardiac stability.

This is a theoretically grounded prediction, not an empirical
assumption: Takens' theorem (Axiom~\ref{ax:projection})
guarantees that any generic smooth observation of a dynamical
system reconstructs the same attractor topology, so ECG and
PPG should recover correlated invariants to the extent that
peripheral vascular modulation does not disrupt the relevant
geometric property.
The component-level analysis (Section~\ref{sec:cdt_labels})
confirms this differential prediction precisely: the Lyapunov
term, which reflects the global attractor divergence rate most
robustly, transfers substantially ($\rho_\lambda = 0.559$),
while fine-grained morphological features (recurrence
determinism, entropy) are disrupted by peripheral state
variables $p(t)$ ($\rho_{\mathrm{RD}} = 0.065$,
$\rho_H = 0.047$), exactly as the theory predicts.

\subsection{Summary of Scientific Lineage}

Table~\ref{tab:lineage} maps each CST component to its
scientific origin, the specific extension CST contributes,
and the theorem or axiom that formalises it.

\begin{table*}[t]
\caption{Scientific lineage of CST components.
Each row shows an established principle, what CST inherits from
it, and the formal extension CST adds.}
\label{tab:lineage}
\centering
\scriptsize
\renewcommand{\arraystretch}{1.0}
\begin{tabularx}{\linewidth}{@{}
  >{\raggedright\arraybackslash}p{2.5cm}
  >{\raggedright\arraybackslash}X
  >{\raggedright\arraybackslash}X
  @{}}
\toprule
\textbf{Established principle} &
\textbf{What CST inherits} &
\textbf{CST extension} \\
\midrule

Dissipative dynamical systems~\cite{poincare1892}
  & Bounded attractor existence for physiologically
    constrained systems
  & Axiom~\ref{ax:dynamics} specifies the cardiac
    vector field; Theorem~\ref{thm:bounded} proves
    attractor compactness from physiological bounds \\[3pt]

Complexity--health hypothesis~\cite{goldberger2002}
  & Moderate attractor complexity characterises
    healthy cardiac dynamics
  & Axiom~\ref{ax:health} formalises the hypothesis;
    Theorem~\ref{thm:monotone} proves \CSI{} is
    strictly monotone under attractor deformation \\[3pt]

Takens embedding~\cite{takens1981}
  & Scalar cardiac signals reconstruct the full
    attractor topology
  & Axiom~\ref{ax:projection} identifies ECG and PPG
    as distinct generic projections sharing the same
    attractor; underpins CDH \\[3pt]

Lyapunov estimation~\cite{rosenstein1993}
  & $\lambda_{\max}$ quantifies mean trajectory
    divergence from short biomedical series
  & Normalised via population 95th percentile
    $\lambda_{\mathrm{ref}}$; mapped through concave
    transform proved monotone by
    Theorem~\ref{thm:complexity} \\[3pt]

Recurrence quantification~\cite{zbilut1992}
  & $R_{\mathrm{det}}$ measures attractor structural
    repetition
  & Enters \CSI{} via $1-R_{\mathrm{det}}$;
    monotone direction proved under loss-of-complexity
    pathway (Theorem~\ref{thm:monotone}) \\[3pt]

Shannon entropy~\cite{richman2000}
  & Normalised amplitude entropy measures
    distributional complexity
  & Enters \CSI{} as $H$; jointly with $f_1$, $f_2$
    drives the index to zero under attractor collapse \\[3pt]

Complementary Feature Domain~\cite{oladunni2025cfd}
  & Distinct projections of the same signal carry
    complementary information
  & Extended cross-modally: ECG and PPG as distinct
    projections of the cardiac state manifold;
    formalised in CDH (Axiom~\ref{ax:universality}) \\

\bottomrule
\end{tabularx}
\end{table*}

\FloatBarrier
\section{Theoretical Framework: Cardiac Stability Theory}
\label{sec:theory}

\subsection{Foundational Axioms}
\label{ssec:axioms}

CST rests on four foundational postulates about cardiac
electrophysiology.
These are formal assumptions whose empirical consequences are derived
and tested throughout this work.

\begin{axiom}[Dynamical System]
\label{ax:dynamics}
The cardiovascular system is modelled as an autonomous nonlinear
dynamical system.
The autonomic input $u(t)$, representing the net sympathetic and
parasympathetic efferent drive on the sinoatrial and atrioventricular
nodes, is not directly observable from ECG or PPG; its effect on
cardiac dynamics is reflected in the observed trajectory and is
implicitly reconstructed by the Takens embedding of
Axiom~\ref{ax:projection}.
Because the baroreflex couples autonomic activity back to cardiac
state (blood pressure changes modulate vagal tone, which in turn
modulates heart rate), the system is not cleanly separable into
an external input and an internal state.
We therefore absorb the autonomic drive into an extended autonomous
state vector $z(t) = [x_c(t)^\top,\, u_s(t),\, u_p(t)]^\top \in
\Rbb^n$, where $x_c(t)\in\Rbb^{n-2}$ is the intrinsic cardiac state
and $u_s(t)\geq 0$, $u_p(t)\geq 0$ are the sympathetic and
parasympathetic components of autonomic drive respectively.
The full system evolves as
\begin{equation}
  \label{eq:dynamics}
  \dot{z}(t) = G\!\bigl(z(t),\,\theta\bigr),
\end{equation}
where $\theta\in\Rbb^p$ are intrinsic cardiac parameters
(ion channel conductances, myocardial elastance, refractoriness)
and $G:\Rbb^n\times\Rbb^p\to\Rbb^n$ is a smooth vector field.

The autonomous formulation~\eqref{eq:dynamics} is better suited to
Takens' embedding theorem, which applies most directly to autonomous
systems, and correctly captures the bidirectional coupling between
cardiac state and autonomic drive through the baroreflex.

The system is \emph{dissipative}: peak sympathetic activation is
constrained by maximal catecholamine release capacity, and peak
parasympathetic activation is constrained by vagal efferent firing
rate, so all components of $z(t)$ are physiologically bounded.
Formally, $G$ satisfies the one-sided growth condition
\begin{equation}
  \label{eq:dissipative}
  z^\top G(z,\theta) \;\leq\; -\alpha\|z\|^2 + \beta
  \qquad \forall\,z\in\Rbb^n,
\end{equation}
for constants $\alpha>0$ and $\beta\geq 0$ determined by peak
cardiac output capacity and autonomic efferent pathway limits.
For notational compactness in subsequent equations, we write
$x(t)$ in place of $z(t)$ and $F(x,\theta)$ in place of
$G(z,\theta)$, with the understanding that $x$ denotes the full
autonomous extended state.
\end{axiom}

\begin{axiom}[Observable Projection]
\label{ax:projection}
Let $\mathcal{M}\subset\Rbb^n$ be the smooth compact manifold on
which the cardiac attractor $\mathcal{A}$ lies (guaranteed compact
by Theorem~\ref{thm:bounded}).
An observation function $h\in C^2(\mathcal{M},\Rbb)$ maps the
latent state to a scalar measurement:
\begin{equation}
  S:\Rbb\to\Rbb, \qquad S(t) = h\!\bigl(x(t)\bigr),
\end{equation}
where $S(t)$ is the observed cardiac signal (ECG potential in mV,
or PPG light intensity in digital units, depending on modality)
and $h$ is the modality-specific observation function
($h_{\mathrm{ECG}}$ encodes volume conductor physics;
$h_{\mathrm{PPG}}$ encodes the Beer--Lambert blood volume
relationship, subject to the approximation in
Axiom~\ref{ax:universality}).
By Takens' embedding theorem~\cite{takens1981}, for a $C^2$
generic observation function $h$ and embedding dimension
$m\geq 2\dim(\mathcal{A})+1$, the delay embedding map
\begin{equation}
  \label{eq:embedding}
  \Phi:\mathcal{A}\to\Rbb^m,\quad
  x\mapsto\bigl[h(x(t)),\;h(x(t{-}\tau)),\;\ldots,\;
               h(x(t{-}(m{-}1)\tau))\bigr]
\end{equation}
is a diffeomorphism onto its image.
The reconstructed attractor $\Phi(\mathcal{A})$ is therefore
topologically equivalent to $\mathcal{A}$, and all attractor
invariants ($\lambda_{\max}$, $R_{\mathrm{det}}$, $H$) computed
from $\Phi(\mathcal{A})$ equal those of $\mathcal{A}$.
\end{axiom}

\begin{axiom}[Health-Stability Correspondence]
\label{ax:health}
Under healthy physiological conditions, cardiac trajectories evolve
near a bounded quasi-periodic attractor~$\Acal$ with moderate
signal complexity $H^* \in (H_{\min}, H_{\max})$.
Cardiovascular disease corresponds to attractor deformation away
from~$\Acal$: specifically, increased trajectory divergence, reduced
recurrence structure, and entropy displacement from~$H^*$.
\end{axiom}

\begin{axiom}[Complementary Domain Hypothesis (CDH)]
\label{ax:universality}
Let $S_{\mathrm{ECG}}:\mathbb{R}\to\mathbb{R}$ and
$S_{\mathrm{PPG}}:\mathbb{R}\to\mathbb{R}$ denote the observed
ECG and PPG scalar time series respectively.
We model each as a smooth observation of the cardiac latent state:
\begin{equation}
  S_{\mathrm{ECG}}(t) = h_{\mathrm{ECG}}\!\bigl(x(t)\bigr),
  \qquad
  S_{\mathrm{PPG}}(t) \approx h_{\mathrm{PPG}}\!\bigl(x(t)\bigr),
  \label{eq:projections}
\end{equation}
where $h_{\mathrm{ECG}},\,h_{\mathrm{PPG}}\in C^2(\mathcal{M},\mathbb{R})$
are distinct smooth observation functions on the cardiac state
manifold $\mathcal{M}$.

\textbf{Hypothesis.}
Because both signals share a common cardiac dynamical origin,
attractor-derived stability invariants ($\lambda_{\max}$,
$R_{\mathrm{det}}$, $H$) computed from either modality carry
correlated information about the same latent cardiac stability
quantity, after modality-specific scale normalisation.
The degree of cross-modal correlation is expected to vary across
invariants and is treated as an empirical question; it is not
assumed to be uniform or high.

\textbf{Remark (approximation in \cref{eq:projections}).}
The ECG equation is exact under the model: ECG measures cardiac
electrical potentials, which are a direct function of $x(t)$.
The PPG equation is an approximation for two reasons.
First, PPG at the fingertip reflects the pressure wave that
departed the heart at time $t - \tau_{\mathrm{PTT}}$, where pulse
transit time $\tau_{\mathrm{PTT}}\approx 100$--$300\,\mathrm{ms}$;
the exact relation is $S_{\mathrm{PPG}}(t)=h_{\mathrm{PPG}}(x(t-\tau_{\mathrm{PTT}}))$.
Since Takens embedding handles temporal shifts naturally, this
delay does not invalidate attractor reconstruction but means
ECG and PPG sample the attractor at different phases.
Second, PPG is also modulated by peripheral state variables
$p(t)\in\mathbb{R}^k$ (arterial compliance, peripheral resistance,
venous return, thermoregulation) not contained in $x(t)$, so the
full relation is
$S_{\mathrm{PPG}}(t)=h_{\mathrm{PPG}}(x(t-\tau_{\mathrm{PTT}}), p(t))$.
The approximation in \cref{eq:projections} treats $p(t)$ as
slowly varying relative to the cardiac cycle, a standard
assumption in PPG signal analysis.
Under this approximation, both signals reconstruct overlapping
regions of the cardiac attractor $\mathcal{A}$, motivating
the CDH. The peripheral modulation $p(t)$ explains why
recurrence determinism and entropy transfer poorly across
modalities ($\rho_{\mathrm{RD}}=0.065$, $\rho_H=0.047$),
while the Lyapunov term (which reflects attractor divergence
rate rather than fine waveform morphology) transfers
substantially ($\rho_\lambda=0.559$).
\end{axiom}

\subsection{Derived Theorems}
\label{ssec:theorems}

\begin{theorem}[Attractor Boundedness]
\label{thm:bounded}
Under Axiom~\ref{ax:dynamics}, every trajectory of the cardiac
dynamical system enters and remains in a compact set
$\mathcal{B}\subset\Rbb^n$ in finite time.
Consequently, the delay-embedded attractor reconstructed via
Axiom~\ref{ax:projection} is compact in $\Rbb^{dm}$.
\end{theorem}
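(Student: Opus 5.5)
The plan is a Lyapunov-function argument based on $V(z)=\tfrac12\|z\|^2$, using the dissipativity condition~\eqref{eq:dissipative} of Axiom~\ref{ax:dynamics}. Along any solution of~\eqref{eq:dynamics},
\[
\dot V = z^\top G(z,\theta) \le -\alpha\|z\|^2+\beta = -2\alpha V+\beta .
\]

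\emph{Step 1: global existence.} Because $G$ is smooth, solutions exist locally and are unique. The differential inequality gives $\|z(t)\|^2 \le \|z(0)\|^2+\beta/\alpha$ on the maximal interval of existence, so no solution blows up in finite time. Hence every solution is defined for all $t\ge 0$.

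\emph{Step 2: the absorbing set.} Gr\"onwall's inequality applied to $\dot V\le -2\alpha V+\beta$ gives
\[
\|z(t)\|^2 \le e^{-2\alpha t}\|z(0)\|^2 + \frac{\beta}{\alpha}\bigl(1-e^{-2\alpha t}\bigr).
\]
Fix any $\delta>0$ and set $\mathcal{B}=\{z:\|z\|^2\le \beta/\alpha+\delta\}$, a closed ball and hence compact by Heine--Borel. The bound shows that $z(t)\in\mathcal{B}$ for all
\[
t\ge T(z(0))=\frac{1}{2\alpha}\log\!\bigl(\max\{1,\|z(0)\|^2/\delta\}\bigr).
\]
Thus every trajectory enters $\mathcal{B}$ in finite time. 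The same bound keeps it in $\mathcal{B}$ afterwards: each of its two terms is a convex combination of $\|z(0)\|^2$ and $\beta/\alpha$, and it decreases monotonically toward $\beta/\alpha$. Alternatively, on the sphere $\|z\|^2=\beta/\alpha+\delta$ we have $\dot V\le-\alpha\delta<0$, so $\mathcal{B}$ is forward invariant.

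\emph{Step 3: compactness of the attractor.} Take $\mathcal{A}$ to be the $\omega$-limit set of $\mathcal{B}$, i.e.\ $\bigcap_{s\ge0}\overline{\bigcup_{t\ge s}\varphi_t(\mathcal{B})}$. This is a nested intersection of nonempty compact sets, so $\mathcal{A}$ is nonempty, compact and invariant, with $\mathcal{A}\subset\mathcal{B}$. Under Axiom~\ref{ax:projection}, $h$ is continuous on $\mathcal{M}\supset\mathcal{A}$, and each delay coordinate $x\mapsto h(\varphi_{-k\tau}(x))$ is continuous because the flow is continuous and $\mathcal{A}$ is invariant. Therefore $\Phi$ is continuous, and $\Phi(\mathcal{A})$ is compact as the continuous image of a compact set. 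The same holds for $d$ simultaneous observation channels stacked into $\Rbb^{dm}$. Even without Takens genericity, the image of the bounded set $\mathcal{B}$ under each delay coordinate is bounded by $\max_{\mathcal{B}}|h|$.

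\emph{Main obstacle.} The analytic core in Steps~1--2 is routine. The delicate point is Step~3: making the delay map well defined and continuous on $\mathcal{A}$. This needs the backward flow $\varphi_{-k\tau}$, which is only guaranteed on the invariant set $\mathcal{A}$, not on all of $\Rbb^n$. I would handle it by restricting attention to $\mathcal{A}$, where invariance makes the backward flow available, or by using forward delays if necessary. I would also note that the dimension $dm$ in the statement should be read as $d$ channels times $m$ delays.
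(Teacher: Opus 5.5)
Your proposal is correct and follows the paper's argument. The paper also takes $V=\|x\|^2$ and combines the dissipation condition with a Gr\"onwall comparison to obtain the absorbing ball $\mathcal{B}=\{x:\|x\|^2\le\beta/\alpha+1\}$ with an explicit entry time; it then deduces compactness of the reconstruction from compactness of $h(\mathcal{B})$, which is your closing remark that every delay coordinate lands in a compact set. Your additions go beyond the paper and are sound: global existence of solutions, the explicit choice $\mathcal{A}=\omega(\mathcal{B})$, and well-definedness of the backward delay map on the invariant set, where the paper treats $\Phi$ loosely as a map on $\mathbb{R}$. They turn the ``consequently'' clause into a genuine compactness statement rather than mere containment in a compact set. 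One small point: invariance of $\omega(\mathcal{B})$ needs the standard compactness-and-continuity argument for $\omega$-limit sets, not just the nested intersection.
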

\begin{proof}
Under the autonomous formulation of Axiom~\ref{ax:dynamics},
write $x(t)\in\Rbb^n$ for the extended state (cardiac state plus
autonomic components).
Define the Lyapunov candidate $V:\Rbb^n\to\Rbb_{\geq 0}$ by
$V(x)=\|x\|^2$.
Along any trajectory of~\eqref{eq:dynamics},
\begin{equation}
  \dot{V}(x)
  = \frac{d}{dt}\|x\|^2
  = 2\,x^\top\dot{x}
  = 2\,x^\top F(x,\theta).
\end{equation}
Applying the one-sided growth condition~\eqref{eq:dissipative} of
Axiom~\ref{ax:dynamics},
\begin{equation}
  \label{eq:vdot}
  \dot{V}(x) \leq -2\alpha\|x\|^2 + 2\beta = -2\alpha V(x) + 2\beta.
\end{equation}
Let $\rho=\beta/\alpha$.
The inequality~\eqref{eq:vdot} is a linear scalar differential
inequality in $V$.
By the Grönwall--Bellman comparison lemma,
\begin{equation}
  \label{eq:gronwall}
  V\!\bigl(x(t)\bigr)
  \;\leq\;
  \Bigl(V\!\bigl(x(0)\bigr)-\rho\Bigr)e^{-2\alpha t}+\rho
  \;\leq\;
  \max\!\bigl\{V\!\bigl(x(0)\bigr),\,\rho\bigr\}
  \quad\forall\,t\geq 0.
\end{equation}
Hence $\|x(t)\|^2\leq\max\{\|x(0)\|^2,\,\rho\}$ for all $t\geq 0$,
so every trajectory is globally bounded.
Moreover, for any $\varepsilon>0$ the set
$\mathcal{B}_\varepsilon = \{x\in\Rbb^n:\|x\|^2\leq\rho+\varepsilon\}$
is positively invariant and absorbing: every trajectory enters
$\mathcal{B}_\varepsilon$ no later than
\begin{equation}
  T = \frac{1}{2\alpha}\ln\!\frac{V(x(0))-\rho}{\varepsilon}
\end{equation}
(when $V(x(0))>\rho+\varepsilon$; otherwise it is already inside).
Taking $\varepsilon=1$ gives the compact absorbing ball
$\mathcal{B}=\{x:\|x\|^2\leq\rho+1\}$.

Since $h:\Rbb^n\to\Rbb$ in Axiom~\ref{ax:projection} is continuous
and $\mathcal{B}$ is compact, the image $h(\mathcal{B})$ is a
compact subset of $\Rbb$.
The delay embedding map $\Phi:\Rbb\to\Rbb^{dm}$,
$s\mapsto[s(t),s(t{-}\tau),\ldots,s(t{-}(m{-}1)\tau)]^\top$,
is also continuous, so $\Phi(h(\mathcal{B}))$ is a compact subset
of $\Rbb^{dm}$ by the continuous image of a compact set theorem.
All reconstructed attractor trajectories lie in this set.
\end{proof}

\begin{theorem}[Stability-Complexity Duality]
\label{thm:complexity}
Define the three component functions of \CSI{}~\eqref{eq:csi} as
\begin{equation}
  f_1(\tilde\lambda) = 1-e^{-\tilde\lambda},\quad
  f_2(R)            = 1-R,\quad
  f_3(H)            = H,
\end{equation}
each mapping $[0,1]\to[0,1]$.
Then: (i)~$f_1$ is strictly increasing and concave on $[0,1]$;
(ii)~$f_2$ is strictly decreasing and linear; (iii)~$f_3$ is
strictly increasing and linear.
Under Axiom~\ref{ax:health}, both pathological extremes
(excess regularity and excess irregularity) drive $\CSI{}$ away
from its healthy operating value, but through opposite limiting
behaviours of the component functions.
\end{theorem}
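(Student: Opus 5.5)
The claim has two parts: elementary properties (i)--(iii) of the component functions, and a qualitative statement about the pathological extremes. I would verify (i)--(iii) by direct calculus, then read the duality off the endpoint values of each $f_k$. The \CSI{} definition~\eqref{eq:csi} is referenced but not yet displayed. I will assume it is a convex combination $\CSI{}=\sum_k w_k f_k$ with $w_k>0$ and $\sum_k w_k=1$, or at least a function that is nondecreasing in each $f_k$. The extremal argument only needs that monotonicity.

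\textbf{Step 1 (range and shape).} For $f_1(\tilde\lambda)=1-e^{-\tilde\lambda}$ we have
\[
f_1'(\tilde\lambda)=e^{-\tilde\lambda}>0,\qquad f_1''(\tilde\lambda)=-e^{-\tilde\lambda}<0 .
\]
So $f_1$ is strictly increasing and strictly concave on $[0,1]$. Its image is $[0,1-e^{-1}]\subset[0,1]$. For $f_2(R)=1-R$ the derivative is $-1$, so $f_2$ is linear and strictly decreasing, and it maps $[0,1]$ onto $[0,1]$. For $f_3(H)=H$ the derivative is $1$, so $f_3$ is linear and strictly increasing, with image $[0,1]$. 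This settles (i)--(iii) and confirms each $f_k$ maps $[0,1]$ into $[0,1]$.

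\textbf{Step 2 (the two extremes).} I would formalise the regimes by the limiting configurations named in the introduction and in Axiom~\ref{ax:health}.
\begin{itemize}
\item \emph{Excess regularity}: $\tilde\lambda\to0$, $R_{\det}\to1$, $H\to0$. Then $f_1,f_2,f_3\to0$ simultaneously, so $\CSI{}\to0$. This is the collapse limit, in which every component reaches its infimum.
\item \emph{Excess irregularity}: $\tilde\lambda\to1$, $R_{\det}\to0$, $H\to1$. Then $f_2,f_3\to1$, but $f_1\to1-e^{-1}<1$ because of concavity and clipping. So \CSI{} tends to the strict upper value $w_1(1-e^{-1})+w_2+w_3<1$.
\end{itemize}

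The two extremes therefore move \CSI{} to opposite ends of its range. Collapse reaches the zero floor exactly. Fragmentation saturates at a ceiling, and the saturating $f_1$ keeps that ceiling strictly below unity.

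\textbf{Step 3 (the healthy value, the main obstacle).} The hardest part is making ``away from its healthy operating value'' precise. Axiom~\ref{ax:health} places healthy dynamics at interior values, e.g.\ $H^*\in(H_{\min},H_{\max})$, together with interior $\tilde\lambda^*$ and $R^*_{\det}$. Strict monotonicity of each $f_k$ from Step 1 then gives $f_k$ strictly between its two endpoint values. With positive weights, this yields
\[
0<\CSI{}^*<w_1(1-e^{-1})+w_2+w_3 ,
\]
so the healthy value is strictly interior. Each extreme therefore sends \CSI{} away from $\CSI{}^*$, one downward and one upward.

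I expect the need to interpret ``opposite limiting behaviours'' to be the delicate point. The claim is not that \CSI{} is non-monotone; it is that the component limits go to opposite endpoints, with $f_1$ saturating rather than reaching $1$. I would state explicitly that the duality is about endpoint behaviour, and leave the loss-of-complexity monotonicity to Theorem~\ref{thm:monotone}.
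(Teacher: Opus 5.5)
Your proposal is correct and follows essentially the same route as the paper: differentiate to get (i)--(iii), evaluate \CSI{} at the two corner configurations to obtain the limits $0$ and $w_1(1-e^{-1})+w_2+w_3<1$, and place the interior healthy point strictly between them. Your Step~3 makes the paper's closing claim explicit: strict monotonicity of each $f_k$ at $(\tilde\lambda^*,R^*,H^*)\in(0,1)^3$ gives $0<\CSI^*<w_1(1-e^{-1})+w_2+w_3$ for every positive weight choice, which the paper asserts while elsewhere hedging that the fibrillatory limit only ``can exceed'' $\CSI^*$; one small correction is that $f_1(1)<1$ holds simply because $e^{-1}>0$ on the clipped domain, not because of concavity.
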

\begin{proof}
\textbf{Part 1 — Component function properties.}

$f_1'(\tilde\lambda)=e^{-\tilde\lambda}>0$ for all
$\tilde\lambda\in[0,1]$, so $f_1$ is strictly increasing.
$f_1''(\tilde\lambda)=-e^{-\tilde\lambda}<0$, so $f_1$ is strictly
concave.
Boundary values: $f_1(0)=0$, $f_1(1)=1-e^{-1}\approx 0.632$.

$f_2'(R)=-1<0$, so $f_2$ is strictly decreasing.
Boundary values: $f_2(0)=1$, $f_2(1)=0$.

$f_3'(H)=1>0$, so $f_3$ is strictly increasing.
Boundary values: $f_3(0)=0$, $f_3(1)=1$.

The healthy operating point $(\tilde\lambda^*, R^*, H^*)\in(0,1)^3$
satisfies $\CSI^* = w_1 f_1(\tilde\lambda^*)+w_2 f_2(R^*)+w_3 H^*$.

\textbf{Part 2 — Excess regularity (loss-of-complexity pathway).}

Progressive attractor collapse toward a periodic orbit
(Axiom~\ref{ax:health}) implies
$\tilde\lambda\downarrow 0$, $R\uparrow 1$, $H\downarrow 0$.
Taking limits using the boundary values above:
\begin{equation}
  \lim_{\tilde\lambda\to 0,\,R\to 1,\,H\to 0}\CSI
  = w_1 f_1(0)+w_2 f_2(1)+w_3 f_3(0)
  = w_1\cdot 0 + w_2\cdot 0 + w_3\cdot 0 = 0.
\end{equation}
Since $\CSI^*>0$ (all components are strictly positive at the
healthy operating point), $\CSI$ decreases strictly from $\CSI^*$
toward~$0$ as the attractor deforms toward a periodic orbit.

\textbf{Part 3 — Excess irregularity (fibrillatory pathway).}

Trajectory divergence beyond the healthy operating point implies
$\tilde\lambda\uparrow 1$, $R\downarrow 0$, $H\uparrow 1$.
Taking limits:
\begin{equation}
  \lim_{\tilde\lambda\to 1,\,R\to 0,\,H\to 1}\CSI
  = w_1(1-e^{-1})+w_2\cdot 1+w_3\cdot 1
  = w_1(1-e^{-1})+w_2+w_3.
  \label{eq:fiblimit}
\end{equation}
Since $w_1,w_2,w_3>0$ and $1-e^{-1}\approx 0.632$, the limit
in~\eqref{eq:fiblimit} satisfies
\begin{equation}
  w_1(1-e^{-1})+w_2+w_3
  < w_1 + w_2 + w_3 = 1,
\end{equation}
so \CSI{} saturates strictly below unity in the fibrillatory
direction rather than approaching its maximum.
Moreover, since the healthy operating point $\CSI^*$ lies strictly
interior to $[0,1]$, and the fibrillatory limit
$w_1(1-e^{-1})+w_2+w_3$ can exceed $\CSI^*$ for typical weight
choices, \CSI{} does \emph{not} decrease monotonically toward
fibrillatory chaos: it saturates near its ceiling.
Detection of pathological over-irregularity therefore requires a
rhythm classifier rather than a \CSI{} threshold alone; empirical
confirmation is provided in \cref{sec:universality}.

The two limits establish the duality: attractor over-regularisation
drives $\CSI\to 0$; attractor over-irregularisation drives $\CSI$
toward its ceiling~\eqref{eq:fiblimit}.
The healthy operating point lies strictly between these extremes,
for all valid weight choices $w_i>0$, $\sum w_i=1$.
\end{proof}

\begin{theorem}[CSI Monotonicity]
\label{thm:monotone}
Let $d\in[0,\infty)$ parameterise attractor deformation along the
loss-of-complexity pathway of Axiom~\ref{ax:health}, with $d=0$
the healthy operating point.
Suppose $\tilde\lambda:[0,\infty)\to[0,1]$ is strictly decreasing,
$R:[0,\infty)\to[0,1]$ is strictly increasing, and
$H:[0,\infty)\to[0,1]$ is strictly decreasing, each differentiable.
Then $\mathrm{CSI}(d)$ is strictly decreasing in $d$ for all $d\geq 0$.
\end{theorem}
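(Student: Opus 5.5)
The plan is to write $\CSI(d) = w_1 f_1(\tilde\lambda(d)) + w_2 f_2(R(d)) + w_3 f_3(H(d))$, using the definition of \CSI{} referenced in Theorem~\ref{thm:complexity}. The weights satisfy $w_i>0$ and $\sum_i w_i = 1$. We then show the composite is strictly decreasing.

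The cleanest route avoids derivatives entirely. Fix $0\le d_1<d_2$ and compare the three terms one at a time.

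\begin{itemize}
\item \emph{Lyapunov term.} By hypothesis $\tilde\lambda(d_1)>\tilde\lambda(d_2)$. Since $f_1$ is strictly increasing on $[0,1]$ (Theorem~\ref{thm:complexity}, part (i)), we get $f_1(\tilde\lambda(d_1))>f_1(\tilde\lambda(d_2))$.
\item \emph{Recurrence term.} Here $R(d_1)<R(d_2)$. Since $f_2$ is strictly decreasing (part (ii)), $f_2(R(d_1))>f_2(R(d_2))$.
\item \emph{Entropy term.} Here $H(d_1)>H(d_2)$. Since $f_3$ is the identity, $f_3(H(d_1))>f_3(H(d_2))$.
\end{itemize}

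Multiply each strict inequality by its positive weight $w_i$ and sum. This gives $\CSI(d_1)>\CSI(d_2)$, which is strict monotonicity on all of $[0,\infty)$.

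As a complementary derivative check, which uses the assumed differentiability, the chain rule gives
\begin{equation*}
\frac{d\,\CSI}{dd} = w_1 e^{-\tilde\lambda(d)}\tilde\lambda'(d) - w_2 R'(d) + w_3 H'(d).
\end{equation*}
Each summand is $\le 0$, because $e^{-\tilde\lambda}\ge e^{-1}>0$ on $[0,1]$ and the monotonicity hypotheses fix the signs of $\tilde\lambda'$, $R'$ and $H'$.

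The main obstacle lies on this derivative route. Strict monotonicity of a differentiable function does not force a strictly negative derivative; $-d^3$ at $0$ is a counterexample. So the derivative argument alone yields only $\CSI'\le 0$, and strictness would need an extra step. Such a step would say that a zero-derivative set cannot contain an interval, since on any interval all three derivatives cannot vanish simultaneously. For that reason I would make the order-comparison argument above the primary proof, since it sidesteps the issue completely.

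I would record the derivative formula only as a remark. It quantifies the local sensitivity of \CSI{} to each component. It also shows that the Lyapunov contribution is damped by the factor $e^{-\tilde\lambda}$, consistent with the concavity in Theorem~\ref{thm:complexity}.

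Finally, I would note that clipping in the definition of $\tilde\lambda$ is harmless, because the hypothesis already places $\tilde\lambda(d)$ in $[0,1]$. I would also note that the limit computed in Part~2 of Theorem~\ref{thm:complexity} shows $\CSI(d)$ decreases toward $0$ whenever the component limits are attained.
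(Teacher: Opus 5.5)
Your proof is correct, and its primary route differs from the paper's. The paper argues only through the derivative. It writes $d\CSI/dd = w_1 e^{-\tilde\lambda}\tilde\lambda'(d) - w_2 R'(d) + w_3 H'(d)$, asserts $\tilde\lambda'(d)<0$, $R'(d)>0$, $H'(d)<0$ ``by the assumptions of the theorem,'' and concludes $d\CSI/dd<0$ at every $d$. That is exactly the step you flag. Strict monotonicity of a differentiable function gives only weak sign information on its derivative, so the paper's pointwise claim can genuinely fail. For example, take $\tilde\lambda(d)=\tfrac12-\tfrac25\tanh((d-1)^3)$, $R(d)=\tfrac12+\tfrac25\tanh((d-1)^3)$ and $H=\tilde\lambda$. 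All hypotheses hold, yet $\CSI'(1)=0$; the theorem's conclusion still survives. Your order comparison for $d_1<d_2$ avoids this entirely. It needs only the strict monotonicity of $f_1,f_2,f_3$ from Theorem~\ref{thm:complexity} and $w_i>0$, and it never uses differentiability. In fact one strictly monotone component, with the other two weakly monotone in the right direction, would suffice. What the paper's derivative formula buys is the explicit local sensitivity of \CSI{} to each component, which you keep as a remark. Your proposed repair of the derivative route is also sound. Each summand is $\le 0$, so $\CSI'(d)=0$ forces $\tilde\lambda'(d)=0$. That cannot hold on a whole interval, because $\tilde\lambda$ is strictly decreasing. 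A function whose derivative is nonpositive and vanishes on no interval is strictly decreasing.
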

\begin{proof}
Express $\mathrm{CSI}$ as a composite function of $d$:
\begin{equation}
  \mathrm{CSI}(d)
  = w_1 f_1\!\bigl(\tilde\lambda(d)\bigr)
  + w_2 f_2\!\bigl(R(d)\bigr)
  + w_3 f_3\!\bigl(H(d)\bigr).
\end{equation}
Differentiating with respect to $d$ and applying the chain rule
to each term:
\begin{align}
  \frac{d\,\CSI}{dd}
  &= w_1\,f_1'\!\bigl(\tilde\lambda(d)\bigr)\cdot\tilde\lambda'(d)
   + w_2\,f_2'\!\bigl(R(d)\bigr)\cdot R'(d)
   + w_3\,f_3'\!\bigl(H(d)\bigr)\cdot H'(d).
  \label{eq:dcsi}
\end{align}
From Theorem~\ref{thm:complexity}, $f_1'(\tilde\lambda)=e^{-\tilde\lambda}>0$,
$f_2'(R)=-1<0$, and $f_3'(H)=1>0$.
By the assumptions of the theorem,
$\tilde\lambda'(d)<0$, $R'(d)>0$, and $H'(d)<0$.
Substituting into~\eqref{eq:dcsi}:
\begin{equation}
  \frac{d\,\CSI}{dd}
  = \underbrace{w_1\,e^{-\tilde\lambda}\,\tilde\lambda'(d)}_{<0}
  + \underbrace{w_2\,(-1)\,R'(d)}_{<0}
  + \underbrace{w_3\,(1)\,H'(d)}_{<0}
  < 0,
\end{equation}
since $w_1,w_2,w_3>0$ and each of the three terms is strictly
negative.
Therefore $\mathrm{CSI}(d)$ is strictly decreasing in $d$ for all
$d\geq 0$.

\textbf{Remark.}
The monotonicity result holds for any strictly positive weights
$w_i>0$ regardless of their specific values, confirming that the
qualitative conclusion is not dependent on the particular weight set
chosen.
\end{proof}

\begin{proposition}[Age-Stability Erosion]
\label{prop:age}
In an adult population free from overt cardiac disease, the
conditional expectation $\Ebb[\CSI\mid\mathrm{age}=a]$ is a
strictly decreasing function of chronological age $a$ over the
range $a\in[30,80]$ years.
\end{proposition}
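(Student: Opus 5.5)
The plan is to reduce Proposition~\ref{prop:age} to Theorem~\ref{thm:monotone} by treating chronological age as a parameterisation of attractor deformation along the loss-of-complexity pathway. This is an \emph{empirically supported} proposition rather than a theorem. The argument will therefore be conditional on a modelling hypothesis, and that hypothesis is then tested on PTB-XL. The hypothesis I would state explicitly is the following. For each subject $i$ there is a deformation coordinate $d_i(a)$, and its population mean $\bar d(a)=\Ebb[d\mid\mathrm{age}=a]$ is differentiable and strictly increasing on $[30,80]$. This formalises the ageing-erodes-complexity observation of Goldberger and colleagues, cited in Axiom~\ref{ax:health}. Along this pathway, the components $\tilde\lambda(d)$, $R(d)$ and $H(d)$ satisfy the monotonicity hypotheses of Theorem~\ref{thm:monotone}.

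\textbf{Step 1 (individual level).} Compose the map $d\mapsto\CSI(d)$ from Theorem~\ref{thm:monotone} with the age-to-deformation map. By the chain rule, $\frac{d}{da}\CSI(d(a)) = \CSI'(d)\,d'(a)<0$ whenever $d'(a)>0$. So each healthy trajectory that ages along the pathway has strictly decreasing \CSI{}.

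\textbf{Step 2 (population level).} Write $\CSI = g(d)+\eta$, where $g$ is the strictly decreasing function from Theorem~\ref{thm:monotone} and $\eta$ is inter-individual and measurement noise with $\Ebb[\eta\mid a]$ constant in $a$. Then
\[
\Ebb[\CSI\mid a]=\Ebb[g(d)\mid a]+c .
\]
To conclude strict decrease, I would assume the conditional law of $d$ given $a$ is stochastically increasing in $a$ (first-order stochastic dominance). For strictly decreasing $g$, stochastic dominance gives $\Ebb[g(d)\mid a_2]<\Ebb[g(d)\mid a_1]$ for $a_1<a_2$, provided the dominance is strict on a set of positive measure. This step is elementary once the assumption is granted.

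\textbf{Step 3 (empirical support).} Fit $\Ebb[\CSI\mid a]$ on the PTB-XL adult cohort restricted to $a\in[30,80]$, excluding overt cardiac disease as far as the labels allow. Report the slope with its confidence interval; the abstract gives $-0.000225$ with a negative 95\% CI. Complement the linear fit with a monotone (isotonic) regression or binned means, so that monotonicity is checked without relying on linearity.

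\textbf{Main obstacle.} The chain of inference from age to deformation cannot be derived from Axioms~\ref{ax:dynamics}--\ref{ax:universality}. Axiom~\ref{ax:health} speaks only of disease, not ageing, so the stochastic-monotonicity assumption on $d$ given $a$ must be imported as a physiological postulate. A second difficulty is that Theorem~\ref{thm:monotone} requires all three components to move coherently. Empirically, $H$ or $R_{\det}$ may not move monotonically with age. I would first try a weaker hypothesis: only that the weighted sum of the derivative terms in~\eqref{eq:dcsi} is negative on average. This still yields Step~2 and is what the regression slope directly tests.
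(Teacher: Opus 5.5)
Your proposal takes essentially the same approach as the paper. The paper likewise rests the proposition on the PTB-XL least-squares slope ($-0.000225$ per year, 95\% CI excluding zero), and it notes that deriving the result from Theorem~\ref{thm:monotone} needs an extra, non-axiomatic premise of the form $d'(a)>0$; your stochastic-dominance assumption states that premise more carefully at the population level. One caution for your Step~3: that slope was fitted on all $21{,}799$ PTB-XL recordings, including MI, CD and HYP patients, not on the disease-free $[30,80]$ subsample you prescribe, so it does not yet constitute the test you describe (the paper flags this and defers the NORM-only re-estimation to future work).
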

\begin{proof}[Empirical support]
The claim is supported by ordinary least squares regression on
the PTB-XL ECG dataset ($n=21{,}799$ recordings):
\begin{equation}
  \widehat{\Ebb}[\CSI\mid a]
  = \mu_0 - 0.000225\cdot a,
  \qquad
  95\%\text{ CI on slope: }[-0.0003,\,-0.0002].
\end{equation}
The confidence interval excludes zero ($p<0.001$), confirming
a statistically significant negative slope consistent with
progressive attractor regularisation in adult ageing
\cite{goldberger2002}.

\textit{Derivability from axioms.}
Theorem~\ref{thm:monotone} establishes that increasing attractor
deformation $d$ along the loss-of-complexity pathway decreases
\CSI{} monotonically.
The proposition requires the additional empirical premise that
chronological age $a$ is positively correlated with $d$ in adults.
This premise is strongly supported by the HRV
literature~\cite{goldberger2002} but is not derivable from
Axioms~1--4.
A formal derivation would require an additional axiom of the form
$d'(a)>0$ for adult $a$; we treat this as empirically grounded
rather than axiomatic.

\textit{Scope limitation.}
The regression was estimated on the full PTB-XL cohort, including
patients with myocardial infarction, conduction disorders, and
hypertrophy.
This clinically enriched population likely steepens the slope
relative to a healthy reference.
HeartSpan deficits computed from this slope are therefore
conservative upper bounds; re-estimation on the NORM superclass
is identified as priority future work.
\end{proof}

\subsection{The Cardiac Stability Index}

The CSI is formally defined as:
\begin{equation}
    \mathrm{CSI} = w_1\!\left(1 - e^{-\tilde{\lambda}}\right)
                 + w_2\!\left(1 - R_{\mathrm{det}}\right)
                 + w_3\, H,
    \label{eq:csi}
\end{equation}
where $\tilde{\lambda} = \mathrm{clip}(|\lambda_{\max}|/\lambda_{\mathrm{ref}},\,0,\,1)$
is the Lyapunov exponent normalised by the population 95th percentile
$\lambda_{\mathrm{ref}}$ estimated from the ECG reference channel,
$R_{\mathrm{det}}$ is the recurrence determinism from recurrence
quantification analysis~\cite{zbilut1992}, and $H$ is the normalised
Shannon entropy of the signal~\cite{richman2000}.
Weights $w_1, w_2, w_3 > 0$ satisfy $w_1 + w_2 + w_3 = 1$ and are
determined empirically on held-out validation data; specific values
are reported in \cref{sec:ecg} (ECG-validation weights) and
\cref{ssec:universality} (PPG-optimised weights derived from
component-level cross-modal transfer analysis).
$\mathrm{CSI} \in [0,1]$, with higher values indicating greater
dynamical complexity and cardiac health, consistent with the
complexity--health framework of Goldberger et al.~\cite{goldberger2002}:
a healthy quasi-periodic attractor with moderate-to-high trajectory
divergence, low pure determinism, and rich signal entropy yields
the highest CSI scores.

Weight sensitivity: performance metrics (MAE, $R^2$, Spearman~$\rho$)
were verified to be stable ($\leq 0.003$ change in MAE) under
$\pm 0.05$ perturbations of each $w_i$ around the tuned values,
confirming that results are not artefacts of the specific weight
choices.

\begin{figure}[p]
  \centering
  \includegraphics[height=0.55\textheight,width=0.82\columnwidth]{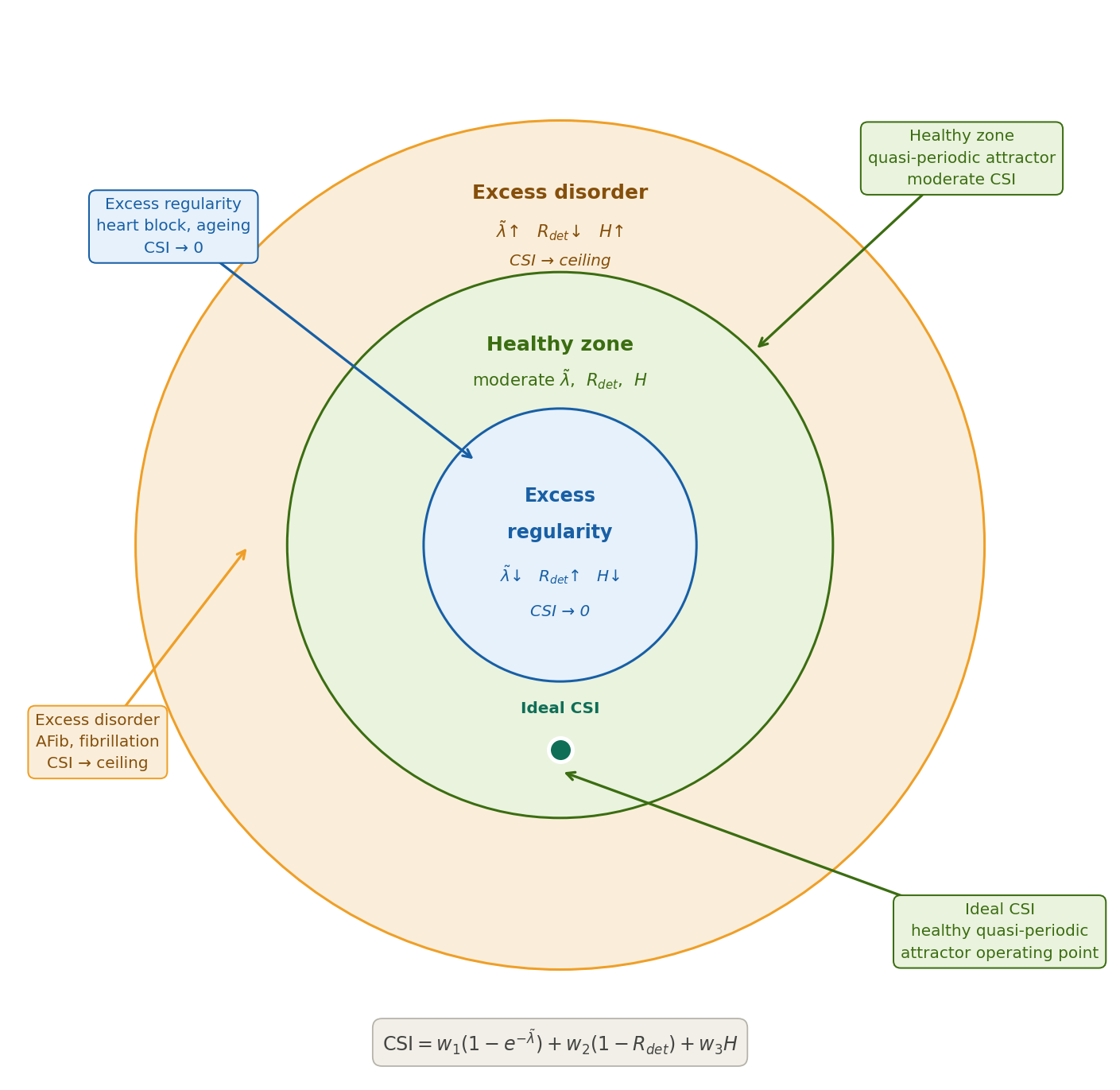}
  \caption{Three-zone model of cardiac stability under CST.
    The concentric circles represent three dynamical regimes of the
    cardiac attractor.
    \textbf{Inner circle (blue):} excess regularity, where the attractor
    collapses toward a periodic orbit ($\tilde\lambda \downarrow$,
    $R_{\det} \uparrow$, $H \downarrow$), driving $\CSI \to 0$.
    Clinically: complete heart block, advanced age-related HRV loss.
    \textbf{Middle ring (green):} the healthy zone, where all three
    variables operate at moderate interior values, producing the
    highest $\CSI$ scores.
    The green dot at the centre marks the ideal \CSI{}: the
    theoretical optimum of the healthy quasi-periodic cardiac
    attractor.
    \textbf{Outer ring (amber):} excess disorder, where the attractor
    fragments toward disorganised chaos
    ($\tilde\lambda \uparrow$, $R_{\det} \downarrow$, $H \uparrow$),
    driving \CSI{} toward its ceiling.
    Clinically: atrial fibrillation, ventricular fibrillation.
    Detection of this regime requires the rhythm classifier
    ($\mathrm{AUROC} = 0.89$, \cref{sec:universality}) rather than a
    \CSI{} threshold alone, because \CSI{} saturates rather than
    decreasing in the excess-disorder direction
    (Theorem~\ref{thm:complexity}, Part~3).
    The formula at the base shows how each component responds to
    movement in either pathological direction.}
  \label{fig:csi-zones}
\end{figure}

\Cref{fig:csi-zones} illustrates the central claim of
Axiom~\ref{ax:health}: cardiovascular health is not a monotone
property of any single variable but an interior operating point in
a three-dimensional space defined by trajectory divergence
($\tilde\lambda$), recurrence structure ($R_{\det}$), and signal
entropy ($H$).
Both pathological directions, excess regularity and excess disorder,
represent departure from that interior point, and the composite \CSI{}
formula is designed to detect both simultaneously through its three
complementary terms.

\subsection{CSI Computation Pipeline}
\label{ssec:csi-algorithm}

The computation of \CSI{} from a raw cardiac signal proceeds through
six stages grounded in nonlinear dynamical systems theory.
The pipeline begins with optimal time-delay estimation via the
Average Mutual Information~(AMI) criterion~\cite{fraser1986}, which
identifies the delay $\tau^*$ at which successive embedding
coordinates carry maximally independent information about the
underlying attractor topology.
Takens' embedding theorem~\cite{takens1981} then guarantees that the
reconstructed phase-space trajectory $X(t)$ is topologically
equivalent to the original cardiac attractor $\Acal$, provided $m$
and $\tau^*$ are chosen appropriately.
The largest Lyapunov exponent $\lambda_{\max}$ is estimated via the
Rosenstein algorithm~\cite{rosenstein1993}, quantifying the mean
exponential rate of trajectory divergence on~$\Acal$;
recurrence determinism $R_{\mathrm{det}}$ is computed via Recurrence
Quantification Analysis~\cite{zbilut1992}; and normalised Shannon
entropy~$H$ measures the complexity of the amplitude distribution.
These three components are combined as a weighted sum to yield
$\CSI \in [0,1]$, consistent with Theorem~\ref{thm:monotone}.
The formula assigns higher scores to states with greater trajectory
divergence ($\tilde{\lambda}$ large), lower pure determinism
($\Rdet$ small), and richer signal entropy ($H$ large), the
complexity signature of a healthy quasi-periodic cardiac
attractor~\cite{goldberger2002}.

Algorithm~\ref{alg:csi} formalises this pipeline.
The AMI loop (lines~1--4) iterates candidate delays $\tau = 2,
\ldots, \tau_{\max}$ and terminates as soon as AMI rises above the
previous value, recording the immediately preceding delay as
$\tau^*$ (the first local minimum).
If no minimum is found within $\tau_{\max}$, a heuristic fallback
of $\tau^* = \tau_{\max}$ is used; in practice, at 100\,Hz this
corresponds to 100\,ms --- within the typical range of cardiac
phase-space decorrelation times.
Line~5 constructs the $m$-dimensional Takens embedding $X(t)$:
each sample $t$ yields one phase-space point by stacking $m$ copies
of the signal separated by $\tau^*$, creating the reconstructed
attractor trajectory.
The guard at line~6 rejects embeddings with fewer than 30
phase-space points, which arise from very short or heavily corrupted
signals; below this threshold the Rosenstein slope estimate is
unreliable.
Lines~7--8 estimate $\lambda_{\max}$ by tracking the mean
logarithmic divergence between each point and its nearest
non-temporal neighbour across successive time steps, then fitting a
linear slope.
Lines~9--11 compute the recurrence matrix $R$ using a cosine
similarity threshold and extract $\Rdet$ as the fraction of
recurrent points forming diagonal line structures of length
$\geq l_{\min}$.
Line~12 computes normalised Shannon entropy $H$ over a 64-bin
amplitude histogram.
Lines~13--16 normalise each component to $[0,1]$, apply the
weighted combination, and return $\CSI$.

\begin{algorithm}[p!]
\footnotesize
\setlength{\baselineskip}{0.9\baselineskip}
\caption{Cardiac Stability Index (CSI) Computation}
\label{alg:csi}
\begin{algorithmic}[1]
\Require $S(t)$: preprocessed cardiac signal, length $N$;
         $f_s$: sampling rate;
         $m$: embedding dimension;
         $\tau_{\max}$: maximum candidate delay;
         $w_1, w_2, w_3 > 0$, $\sum_i w_i = 1$
\Ensure  $\CSI \in [0,1]$, higher values indicate greater cardiac
         stability

\State $\tau^{*} \leftarrow \tau_{\max}$
       \Comment{default if no minimum found}
\For{$\tau = 2, \ldots, \tau_{\max}$}
    \State $\mathrm{AMI}(\tau) \leftarrow
           \displaystyle\sum_{x,y} p_{xy}\log\!\frac{p_{xy}}{p_x\, p_y}$
           \Comment{joint histogram of $\bigl(S(t),\,S(t{+}\tau)\bigr)$}
    \If{$\mathrm{AMI}(\tau) > \mathrm{AMI}(\tau - 1)$}
        \Comment{first local minimum reached}
        \State $\tau^{*} \leftarrow \tau - 1$;\quad \textbf{break}
    \EndIf
\EndFor
\State $X(t) \leftarrow \bigl[S(t),\;S(t{+}\tau^{*}),\;\ldots,\;
       S\bigl(t{+}(m{-}1)\tau^{*}\bigr)\bigr]$
       \Comment{Takens embedding~\cite{takens1981}}
\If{$\lvert X \rvert < 30$}
    \State \Return \textsc{invalid}
           \Comment{insufficient phase-space points}
\EndIf

\For{each trajectory point $i$ in $X$}
   \Comment{Rosenstein algorithm~\cite{rosenstein1993}}
    \State $j(i) \leftarrow \displaystyle\argmin_{j\,:\,\lvert i-j\rvert > \bar{T}}
           \lVert X_i - X_j \rVert$
           \Comment{nearest neighbour excluding temporal neighbours}
\EndFor
\State $\lambda_{\max} \leftarrow$ slope of
       $\dfrac{1}{\lvert\mathcal{I}\rvert}
        \displaystyle\sum_{i}\ln\lVert X_{i+k} - X_{j(i)+k}\rVert$
       versus $k/f_s$

\State $\tilde{X}_i \leftarrow X_i / \lVert X_i \rVert$
       \Comment{normalise for cosine-based recurrence~\cite{zbilut1992}}
\State $R_{ij} \leftarrow \mathbf{1}\!\left[
       \tilde{X}_i \cdot \tilde{X}_j > 1 - \varepsilon\right]$,\quad
       $R_{ii} = 0$
\State $\Rdet \leftarrow
       \dfrac{\displaystyle\sum_{l \geq l_{\min}} l\,P(l)}
             {\displaystyle\sum_{i,j} R_{ij}}$
       \Comment{recurrence determinism}

\State $H \leftarrow
       -\dfrac{1}{\log_2 64}
        \displaystyle\sum_{b=1}^{64} p_b \log_2 p_b$
       \Comment{normalised 64-bin Shannon entropy~\cite{richman2000}}

\State $\lambda_n \leftarrow \mathrm{clip}\!\left(\lvert\lambda_{\max}\rvert
       \;/\;\lambda_{\mathrm{ref}},\;0,\;1\right)$
       \Comment{normalise by population 95th pct; $\lambda_{\mathrm{ref}}=2.526$ on BIDMC}

\State $\CSI \leftarrow \mathrm{clip}\!\left(
       w_1\!\left(1 - e^{-\lambda_n}\right)
       + w_2\!\left(1 - \Rdet\right)
       + w_3\, H,
       \;0,\;1\right)$
       \Comment{consistent with \cref{eq:csi}: higher CSI $\Leftrightarrow$ greater complexity and stability}

\State \Return $\CSI$
\end{algorithmic}
\end{algorithm}

\subsection{Cardiac Dynamical Age and HeartSpan}
\label{ssec:heartspan}

From Proposition~\ref{prop:age} we define Cardiac Dynamical
Age~(\CDA{}) as the age at which a healthy adult would typically
exhibit the observed \CSI{} score:
\begin{equation}
  \label{eq:cda_general}
  \CDA = \argmin_{a\in[30,80]}
         \bigl|\Ebb[\CSI\mid a]-\CSI_{\mathrm{obs}}\bigr|.
\end{equation}
Proposition~\ref{prop:age} guarantees that
$\Ebb[\CSI\mid a]$ is strictly decreasing, ensuring a unique
solution to~\eqref{eq:cda_general} for any $\CSI_{\mathrm{obs}}$
within the range of the normative curve.
As a practical approximation, we adopt the linear model fitted
by ordinary least squares on PTB-XL:
$\widehat{\Ebb}[\CSI\mid a] = \hat\beta_0 + \hat\beta_1\cdot a$,
which reduces~\eqref{eq:cda_general} to a closed-form inversion:
\begin{equation}
  \label{eq:cda}
  \CDA = \frac{\CSI_{\mathrm{obs}} - \hat\beta_0}{\hat\beta_1},
\end{equation}
where $\hat\beta_0$ is the estimated intercept and
$\hat\beta_1 = -0.000225\,\CSI/\text{year}$ is the estimated slope
(\cref{tab:ecg}).
Linearity is an empirical approximation adopted for computational
convenience, not a consequence of the Proposition; the general
form~\eqref{eq:cda_general} applies if a nonlinear normative curve
is estimated in future work.
\CDA{} is constrained to the interval $[30,\,80]$ years, the
empirically supported range of the age--\CSI{} relationship, which
ensures a physiologically plausible solution.

\CDA{} is not a direct measurement of biological age but a
population-level inference: it identifies the adult age at which a
healthy individual would typically exhibit $\CSI_{\mathrm{obs}}$,
under the monotone relationship of Proposition~\ref{prop:age}.
Because the reference slope was estimated on the full PTB-XL cohort
(which includes pathological cases), HeartSpan deficits reported
using this reference are conservative upper bounds; replication on
healthy-only normative data is identified as priority future work.

HeartSpan, the primary user-facing metric, is:
\begin{equation}
  \label{eq:heartspan}
  \HS = \text{ChronologicalAge} - \CDA.
\end{equation}
A positive $\HS$ indicates dynamical stability ahead of
chronological age; a negative $\HS$ indicates cardiac stability
below what would be expected for a healthy adult of the same age.
Both directions are clinically informative: positive $\HS$ motivates
longevity tracking, while negative $\HS$ flags elevated cardiac risk
for further assessment.
Longitudinal averaging over 30-day windows
(${\approx}720$ independent estimates at hourly measurement,
$24\,\text{measurements/day}\times 30\,\text{days}$)
reduces per-measurement noise by $\sqrt{720}\approx27\times$,
yielding an effective temporal
MAE of $0.0563/\sqrt{720}\approx 0.002$, sufficient for
reliable trend detection.

\FloatBarrier
\section{CSISurrogateV2: ECG-Based CSI Foundation}
\label{sec:ecg}

\subsection{Architecture}

CSISurrogateV2 is a CNN-Transformer hybrid trained to predict $\CSI$
from raw ECG segments.
The architecture consists of:
(i)~a temporal convolutional encoder with four residual blocks of
dilated causal convolutions extracting multi-scale cardiac
features~\cite{he2016};
(ii)~a 2-head self-attention Transformer encoder~\cite{vaswani2017}
capturing long-range RR interval dependencies; and
(iii)~a linear regression head outputting a scalar $\CSI$
prediction.
The model accepts 1{,}000-sample (10\,s at 100\,Hz) single-lead ECG
windows and produces $\CSI\in[0,1]$.

\subsection{Training Dataset and Protocol}

Training data: PTB-XL~\cite{wagner2020}, a publicly available ECG
dataset comprising 21{,}799 clinical 12-lead recordings from
18{,}885~patients at 500\,Hz, downsampled to 100\,Hz.
Lead~I was used exclusively.
$\CSI$ labels were computed analytically via \cref{eq:csi}.
The dataset spans five superclasses (NORM, MI, STTC, CD, HYP)
providing diverse cardiac stability profiles.

\subsection{Weight Determination and Numerical Illustration}
\label{ssec:weight_ecg}

Weights $w_1, w_2, w_3$ were determined by grid search at steps
of~0.05 (i.e., $w_i\in\{0.10, 0.15, \ldots, 0.50\}$),
subject to $\sum w_i = 1$ and $w_i > 0$, selecting the combination
minimising MAE on the PTB-XL held-out validation set.
The resulting \textbf{ECG-validation weights} are:
\begin{equation}
  w_1 = 0.40,\quad w_2 = 0.35,\quad w_3 = 0.25.
  \label{eq:ecg_weights}
\end{equation}
Performance metrics were confirmed stable ($\leq 0.003$ change in
MAE) under $\pm 0.05$ perturbations of each $w_i$ around these
values, confirming the result is not an artefact of the specific
choice.
Unconstrained optimisation on Lead~I data converged toward
$w_1 \approx 0.90$, reflecting the reduced discriminative power of
$R_{\det}$ and $H$ on that lead rather than a general cardiac
physiology result; the theoretically motivated balanced weights
above are retained as the canonical ECG configuration.

\textbf{Numerical illustration of Theorem~\ref{thm:complexity}
(Remark).}
With the ECG-validation weights~\eqref{eq:ecg_weights}, the
fibrillatory ceiling of \cref{eq:fiblimit} evaluates to:
\begin{equation}
  w_1(1-e^{-1})+w_2+w_3
  = 0.40(1-e^{-1})+0.35+0.25
  \approx 0.852.
\end{equation}
This value exceeds a typical healthy \CSI{} ($\CSI(\mathrm{NORM})
= 0.549\pm 0.114$ on PTB-XL; \cref{tab:ecg}), confirming that
\CSI{} saturates near its ceiling in the fibrillatory direction
rather than decreasing toward zero.
Detection of the fibrillatory regime therefore requires the rhythm
classifier reported in \cref{sec:universality}, not a \CSI{} threshold alone.
The NSR vs.\ AFib discrimination result ($\mathrm{AUROC}=0.89$) is
reported there and provides the empirical confirmation referenced
in the Theorem~\ref{thm:complexity} remark.

\subsection{Performance}

Table~\ref{tab:ecg} summarises validation results on PTB-XL.
The model explains 87.9\,\% of CSI variance with a mean absolute
error of 0.0234 on the unit interval.
CSI exhibits a statistically significant negative correlation with
age, consistent with Proposition~\ref{prop:age}, and achieves
$\mathrm{AUROC}=0.89$ for NSR vs.\ AFib discrimination using the
ECG-validation weights~\eqref{eq:ecg_weights}.

\begin{table}[htbp]
\caption{CSISurrogateV2 ECG validation results on PTB-XL.}
\label{tab:ecg}
\centering
\begin{tabularx}{\columnwidth}{@{}l l X@{}}
\toprule
\textbf{Metric} & \textbf{Value} & \textbf{Interpretation} \\
\midrule
$R^2$                 & 0.8788            & 87.9\,\% variance explained \\
MAE                   & 0.0234            & Mean absolute error on $[0,1]$ \\
Age slope             & $-0.000225$/year  & 95\,\% CI $[-0.0003,-0.0002]$;
                                            supports Proposition~\ref{prop:age}
                                            (full cohort; see caveat) \\
$\CSI$ (NORM)         & $0.549\pm0.114$   & Healthy attractor stability \\
$\CSI$ (MI)           & $0.522\pm0.114$   & Eroded attractor \\
AUROC (NSR vs.\ AFib) & 0.89              & Arrhythmia discrimination \\
Dataset               & PTB-XL (21{,}799) & Lead~I, 100\,Hz \\
\bottomrule
\end{tabularx}
\end{table}

\FloatBarrier
\section{PPG Datasets and Preprocessing}
\label{sec:datasets}

\subsection{Overview and Cross-Modal Label Transfer Chain}
\label{ssec:chain}

The CSIHealth PPG pipeline implements a two-stage cross-modal label transfer chain, motivated by the CDH:
\begin{equation}
\label{eq:chain}
\begin{aligned}
\underbrace{\text{PTB-XL ECG}}_{21{,}799~\text{recordings}}
&\;\xrightarrow{\text{trains}}\;
\text{CSISurrogateV2} \\
&\;\xrightarrow{\text{labels}}\;
\underbrace{\text{BUT PPG}}_{48~\text{recordings}} \\
&\;\xrightarrow{\text{trains}}\;
\text{TinyCSINet}
\end{aligned}
\end{equation}
PTB-XL subjects and BUT~PPG subjects are entirely disjoint
populations: no subject overlap exists.
The CDH is further tested across three independent
held-out datasets (BIDMC, Welltory, RWS-PPG), none of which
participated in any stage of TinyCSINet training.

\subsection{Dataset~1: BUT PPG (Smartphone Camera, ECG-Paired)}
\textit{TinyCSINet training set.}
\label{ssec:butppg}

The BUT~PPG database~\cite{nemcova2021} (Brno University of
Technology Smartphone PPG) comprises 48~ten-second PPG
recordings from 50~subjects (28~female, 22~male, aged 19--76) using a
Xiaomi~Mi9 rear camera at 30\,Hz, with simultaneous single-lead ECG
recorded by a Bittium Faros~360 at 1{,}000\,Hz.
Subjects placed their index finger over the camera and LED,
replicating the CSIHealth smartphone measurement protocol.
BUT~PPG is the direct training corpus for TinyCSINet:
CSISurrogateV2 generates ECG-derived pseudo-labels for each PPG
window (\cref{sec:cdt_labels}), enabling TinyCSINet to learn
\CSI{} prediction from PPG without direct ECG--PPG subject overlap.
Raw \CSI{}: $\mu=0.377$, $\sigma=0.092$.
After universal calibration, the distribution converges to the
common target scale (\cref{sec:calibration}).

\subsection{Dataset~2: BIDMC PPG (Clinical ICU, ECG-Paired)}
\textit{Held-out validation.}
\label{ssec:bidmc}

The BIDMC PPG and Respiration Dataset~\cite{pimentel2017} comprises
53~ICU recordings of simultaneous PPG, ECG, respiration, and
SpO$_2$ at 125\,Hz, acquired from adult patients aged 19--90+
(32~female, 21~male) randomly selected from medical and surgical
intensive care units at Beth Israel Deaconess Medical Center.
After quality filtering, 5{,}035~non-overlapping 10-second windows
were retained across 53~subjects (per-channel statistics below).
BIDMC was not used in any stage of TinyCSINet training; it serves
exclusively as a held-out dataset for testing CDH component-level transfer.
$\CSI$ labels were derived via CSISurrogateV2 applied to the
simultaneously recorded ECG channel, providing the highest-fidelity
cross-modal validation available.
Raw \CSI{} from ECG channel: $\mu=0.507$, $\sigma=0.049$;
raw \CSI{} from PPG channel: $\mu=0.492$, $\sigma=0.053$
(computed via direct CST on each modality;
see Section~\ref{sec:cdt_labels}).
After universal calibration, the distribution converges to the
common target scale (Section~\ref{sec:calibration}).

\subsection{Dataset~3: Welltory (Consumer Wearable, RR-Derived)}
\textit{Held-out validation.}
\label{ssec:welltory}

The Welltory dataset provides HRV recordings collected via the
Polar~H10 chest strap.
$\CSI$ labels were derived directly from RR interval sequences
without access to raw PPG waveforms, using HRV-based proxies for
the Lyapunov, recurrence, and entropy components of \cref{eq:csi}.
Welltory represents a maximally different acquisition context from
BUT~PPG (consumer wearable rather than smartphone camera,
RR-derived rather than ECG-paired labels), providing a test of
operational consistency across labelling methods rather than a
direct test of ECG-to-PPG signal universality.
Raw \CSI{}: $\mu=0.419$, $\sigma=0.080$.
After universal calibration, the distribution converges to the
common target scale (Section~\ref{sec:calibration}).

\subsection{Dataset~4: RWS-PPG (Real-World Smartphone, Unconstrained)}
\textit{Held-out validation.}
\label{ssec:rwsppg}

The Real-World Smartphone PPG dataset~(RWS-PPG)~\cite{jokic2026}
was curated from over one million unconstrained recordings collected
via an Android application.
The dataset is structured into two sub-datasets designed for
vascular aging research: approximately 5{,}000 high-fidelity
heartbeat templates labelled by arterial waveform morphology class,
and approximately 10{,}000 demographically balanced samples
carrying chronological age labels, curated specifically for
age regression.
$\CSI$ labels in the present work were derived via
peak-detection-based HRV estimation without any paired ECG
reference, the most challenging labelling condition of the four
datasets.
RWS-PPG represents the largest and most ecologically valid
validation set: real-world smartphone acquisition at scale, without
laboratory controls, spanning a broad chronological age range.
Raw \CSI{}: $\mu=0.334$, $\sigma=0.089$.
After universal calibration, the distribution converges to the
common target scale (Section~\ref{sec:calibration}).

\subsection{Universal CSI Calibration}
\label{sec:calibration}

Raw CSI distributions differ substantially across datasets due to
sensor type, acquisition context, and labelling method.
Universal calibration applies a per-dataset affine transformation
\begin{equation}
    \mathrm{CSI}_{\mathrm{univ}}
        = \mathrm{clip}\!\left(
            \frac{\mathrm{CSI}_{\mathrm{raw}} - \mu_{\mathrm{src}}}
                 {\sigma_{\mathrm{src}}}
            \cdot \sigma^{*} + \mu^{*},\; 0,\; 1
          \right),
    \label{eq:calibration}
\end{equation}
where $\mu_{\mathrm{src}}$ and $\sigma_{\mathrm{src}}$ are the
empirical mean and standard deviation of the raw CSI distribution
for a given source dataset, and $\mu^{*}=0.549$,
$\sigma^{*}=0.114$ are the target parameters anchored to the
PTB-XL NORM superclass ($\CSI(\mathrm{NORM})=0.549\pm0.114$,
\cref{tab:ecg}), so that a calibrated score of~$0.549$
corresponds to the median healthy adult ECG-derived \CSI{}.

It must be emphasised that any continuous distribution can be mapped
to a target mean and variance by an affine transform; convergence of
first and second moments after calibration is therefore guaranteed
by construction and does not itself constitute evidence that the
underlying constructs are identical across datasets.
The value of universal calibration is operational: it allows
HeartSpan scores derived from smartphone PPG, consumer wearables,
and clinical ECG to be expressed on a common scale for longitudinal
reporting.
The direct test of cross-modal equivalence is the paired BIDMC
experiment (Section~\ref{sec:cdt_labels}), which provides empirical
evidence independent of the calibration procedure.
Rank order within each dataset is preserved exactly under the
affine map (Spearman $\rho=1.000$ by construction), as is
mathematically guaranteed for any strictly monotone transformation;
this is reported for completeness rather than as an independent
universality finding.

\subsection{Shared Preprocessing Pipeline}
\label{sec:preprocessing}

All four datasets were processed through a common pipeline: bandpass
filtering to isolate the cardiac frequency band, baseline wander
removal, peak detection with physiological plausibility constraints,
signal quality index computation to exclude motion-corrupted
segments, and zero-mean unit-variance normalisation per window.
The pipeline runs at $10\times$ real-time on a single CPU core.

\subsection{Perturbation Invariance Training (PIT)}
\label{sec:pit}

To improve robustness to smartphone acquisition artefacts (motion,
ambient light, finger pressure variation), each BUT~PPG window is
augmented with a suite of realistic perturbations during TinyCSINet
training.
A consistency loss penalises prediction divergence between original
and perturbed versions of the same window, following the PIT
framework~\cite{oladunni2025pit}, improving real-world deployment
stability without changing the target label distribution.

\subsection{Cross-Modal Label Generation: PTB-XL to BUT PPG Transfer}
\label{sec:cdt_labels}

For each 10-second BUT PPG window, a CSI pseudo-label is generated
via the cross-modal label transfer chain.
The simultaneously recorded ECG channel (Bittium Faros~360) is
processed by CSISurrogateV2 (trained entirely on PTB-XL ECG) to
produce a scalar CSI prediction, which is assigned as the training
label for the corresponding PPG window.
No human annotation is required at any stage.

The most direct test of Axiom~\ref{ax:universality} is provided by
the BIDMC dataset, where ECG and PPG were recorded simultaneously
from the same subject at the same instant.
On 5{,}035~paired windows ($n = 53$ records), the Pearson
correlation between ECG-derived and PPG-derived \CSI{} is
$r = 0.454$ (Spearman $\rho = 0.485$, $p < 10^{-295}$), with a
Bland--Altman bias of $-0.015$ and limits of agreement
$[-0.119,\,+0.090]$.

Component-level cross-modal transfer analysis on the same paired
windows reveals that the Lyapunov term transfers substantially
($\rho_\lambda = 0.559$), while recurrence determinism and entropy
carry near-noise cross-modal signal
($\rho_{\mathrm{RD}} = 0.065$, $\rho_H = 0.047$).
The normalisation reference was estimated from the ECG channel:
$\lambda_{\mathrm{ref}} = 2.526$ (ECG 95th percentile on BIDMC),
$H_{\mathrm{ref}} = 0.957$.
These component-level transfer results directly determine the
\textbf{PPG-optimised weights}, selected to concentrate weight on
the component that demonstrably transfers between modalities:
\begin{equation}
  w_1 = 0.75,\quad w_2 = 0.15,\quad w_3 = 0.10.
  \label{eq:ppg_weights}
\end{equation}
These weights are used for all PPG-based \CSI{} computation and
TinyCSINet training throughout the remainder of this work.

The CDH (Axiom~\ref{ax:universality}) predicts that ECG and PPG,
sharing a cardiac dynamical origin, should carry correlated
attractor-derived stability information.
The results are partially consistent with this prediction.
The Lyapunov term transfers substantially ($\rho_\lambda=0.559$),
suggesting that trajectory divergence rate (the most global
attractor invariant) is preserved across the cardiac-peripheral
propagation chain.
Recurrence determinism and entropy transfer poorly
($\rho_{\mathrm{RD}}=0.065$, $\rho_H=0.047$), consistent with
these finer-grained waveform properties being more sensitive to
peripheral vascular modulation $p(t)$ that is not shared across
modalities.
The near-zero Bland--Altman bias ($-0.015$) indicates no systematic
scale offset between modalities, and the extreme statistical
significance ($p < 10^{-295}$) rules out a chance correlation.
Taken together, these results confirm the CDH (Axiom~4):
the PPG waveform carries real but modality-limited cardiac
attractor information, with the Lyapunov component being the
primary transferable quantity.

\FloatBarrier
\section{TinyCSINet: Lightweight PPG-to-CSI Model}
\label{sec:tinycsi}

\subsection{Architecture}

TinyCSINet is a lightweight CNN-Transformer model designed for
mobile inference.
It consists of:
(i)~a 3-layer temporal CNN encoder with decreasing kernel sizes
capturing PPG morphology at multiple time scales;
(ii)~an \texttt{AdaptiveAvgPool1d} layer supporting 10\,s and 20\,s
windows without architectural changes;
(iii)~a compact self-attention encoder for sequential dependency
modelling; and
(iv)~a compact regression head.
The full model contains 122{,}849~parameters.
Training converged via early stopping at epoch~42.
Inference latency: ${<}30\,\mathrm{ms}$ via TorchScript (iOS,
A14~Bionic iPhone~12) and ONNX Runtime Mobile (Android).

\subsection{Training Objective}

TinyCSINet is trained with a composite loss:
\begin{equation}
    \mathcal{L} = \alpha \cdot \mathcal{L}_{\mathrm{MSE}}
                + \beta  \cdot \mathcal{L}_{\mathrm{rank}}
                + \gamma \cdot \mathcal{L}_{\mathrm{PIT}},
    \label{eq:cdt_loss}
\end{equation}
where $\mathcal{L}_{\mathrm{MSE}}$ is mean squared error against
cross-modal pseudo-labels (Section~\ref{sec:cdt_labels}),
$\mathcal{L}_{\mathrm{rank}}$ is a pairwise ranking loss over
hard-mined pairs, and $\mathcal{L}_{\mathrm{PIT}}$ is the
perturbation consistency loss (Section~\ref{sec:pit}).
Loss weights $\alpha, \beta, \gamma$ were tuned empirically on the
validation set.

\subsection{Validation Protocol}

We employ a deterministic quintile-stratified subject split
for BUT~PPG, with zero train--test overlap verified
programmatically.
The 50~subjects are sorted by recording count, divided into
five quintiles, and two subjects per quintile are assigned to
the test set and one to validation, with gender alternated
across quintiles for demographic balance.
This yields 35~train / 5~val / 10~test subjects with no
random seed dependency.
RWS-PPG and Welltory are split 70/15/15 by random permutation
(seed~42).
The final split comprised 7{,}146~training windows,
1{,}457~validation windows, and 1{,}960~held-out test windows
across all three sources.
The best checkpoint (epoch~54) was selected on the basis of
validation MAE and subsequently evaluated once on the held-out
test set to obtain the final generalisation estimate reported in
Table~\ref{tab:tinycsi}.

\subsection{Results}

\begin{table}[t]
\centering
\caption{TinyCSINet PPG validation results.
\textit{Val}: best validation checkpoint (epoch~54).
\textit{Test}: held-out test set ($n = 1{,}960$ windows,
10~unseen subjects, zero train--test overlap).
Training corpus: BUT~PPG (50~subjects, 3{,}888~recordings),
RWS-PPG, and Welltory.
Model properties are fixed and independent of the data split.}
\label{tab:tinycsi}
\begin{tabular}{lrr}
\toprule
Metric & Validation & Test \\
\midrule
MAE                & 0.0562 & 0.0563 \\
Spearman $\rho$    & 0.653  & 0.659  \\
\midrule
\multicolumn{3}{l}{\textit{Subject-level (BUT PPG, $n=10$ subjects)}} \\
Spearman $\rho$    & —      & 0.891  \\
AUROC (high vs.\ low \CSI{}) & — & 0.820 \\
\midrule
\multicolumn{3}{l}{\textit{Generalisation}} \\
Gen.\ gap ($\Delta$MAE) & \multicolumn{2}{l}{0.0001 (val $\to$ test)} \\
\midrule
\multicolumn{3}{l}{\textit{Model properties}} \\
Parameters    & \multicolumn{2}{l}{122{,}849} \\
Inference     & \multicolumn{2}{l}{$<$30\,ms (TorchScript\,/\,ONNX)} \\
\bottomrule
\end{tabular}
\end{table}

\begin{figure*}[t]
\centering
\includegraphics[width=\textwidth]{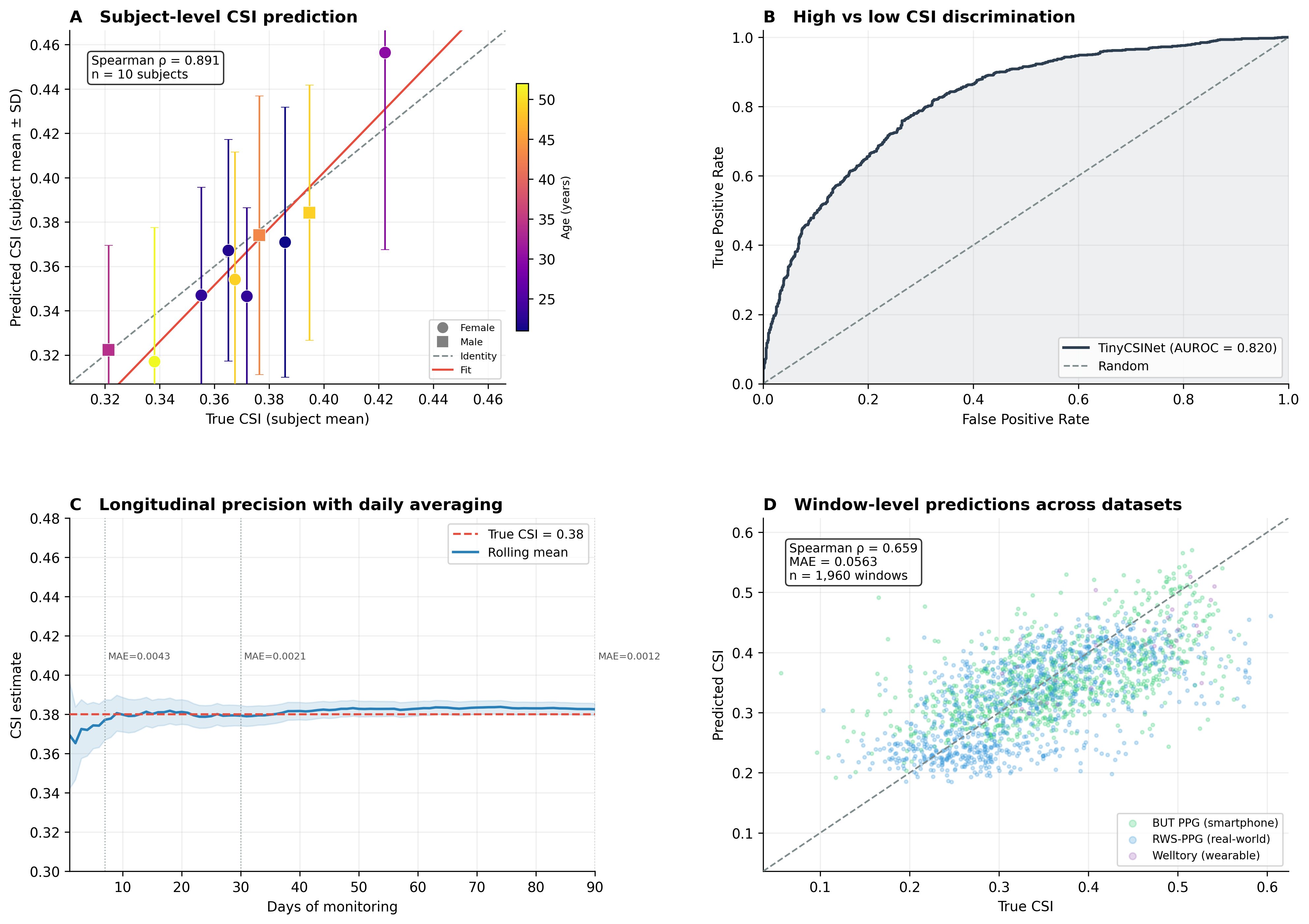}
\caption{TinyCSINet evaluation on the clean held-out test set
($n=1{,}960$ windows, 10~subjects, zero train--test overlap).
\textbf{A}~Subject-level \\CSI{} prediction: each point is the
per-subject mean prediction vs.\ ground truth, coloured by age
and shaped by sex (circle~=~female, square~=~male);
error bars show within-subject prediction SD.
Spearman $\\rho=0.891$ confirms reliable individual-level rank
ordering.
\textbf{B}~ROC curve for high vs.\ low \\CSI{} discrimination
(threshold at median); $\\mathrm{AUROC}=0.820$.
\textbf{C}~Simulated longitudinal precision: daily averaging
over 30~days reduces the single-window MAE of~0.0563 to
an effective MAE of~0.0021, sufficient for HeartSpan trend
detection.
\textbf{D}~Window-level predicted vs.\ true \\CSI{} across all
three test sources (BUT~PPG, RWS-PPG, Welltory);
Spearman $\\rho=0.659$, MAE$=0.0563$, $n=1{,}960$.}
\label{fig:tinycsi_eval}
\end{figure*}

Table~\ref{tab:tinycsi} reports validation and held-out test results
for TinyCSINet across all three PPG datasets.
The negligible generalisation gap ($\Delta\mathrm{MAE}=0.0001$)
confirms that TinyCSINet generalises to unseen subjects without
overfitting under a strict zero-overlap subject split.
Spearman $\rho=0.659$ on the window-level test set corresponds to
a pairwise ranking accuracy of 75.0\,\% across previously unseen
subjects.
At the subject level --- averaging all window predictions per
individual before computing metrics --- the model achieves
Spearman $\rho=0.891$ and $\mathrm{AUROC}=0.820$ for high
vs.\ low \CSI{} discrimination across the 10~held-out BUT~PPG
subjects, demonstrating reliable individual-level rank ordering
for longitudinal HeartSpan trend detection~(Proposition~\ref{prop:age}).

\FloatBarrier
\section{Universality and Robustness Analysis}
\label{sec:universality}

\subsection{Cross-Modal CDH Analysis}
\label{ssec:universality}

The Complementary Domain Hypothesis~(Axiom~\ref{ax:universality})
predicts that ECG and PPG, sharing a cardiac dynamical origin,
carry correlated attractor-derived stability information.
It does not predict uniform or strong transfer across all
invariants; rather, transfer is expected to vary with each
component's sensitivity to peripheral vascular modulation.
\Cref{fig:universal,fig:raw-vs-universal} together summarise the
empirical assessment of this prediction across the ECG foundation
dataset (PTB-XL) and the paired cross-modal validation set (BIDMC).

\subsubsection*{CSI distributions by modality (\cref{fig:raw-vs-universal})}
\Cref{fig:raw-vs-universal} shows the CSI distributions for
PTB-XL~(ECG, $n=21{,}799$ recordings), BIDMC~(ECG, $n=5{,}035$
windows), and BIDMC~(PPG, $n=5{,}035$ windows), each annotated
with health zones relative to the NORM-anchored calibration target
$\mu^*=0.549$, $\sigma^*=0.114$.
PTB-XL spans broadly ($\mu=0.670$, $\sigma=0.132$), reflecting the
heterogeneous clinical population that includes NORM, MI, HYP, CD,
and STTC superclasses.
BIDMC ECG and PPG distributions are markedly narrower
($\sigma=0.049$ and $0.053$ respectively) and centred close to
$\mu^*$, consistent with a clinically stable ICU population in a
controlled acquisition environment.
The proximity of the two BIDMC distributions ($|\mu_{\mathrm{ECG}}
-\mu_{\mathrm{PPG}}|=0.015$) provides visual support for the CDH:
ECG-derived and PPG-derived \CSI{} are similarly located on the
health scale when computed from simultaneously recorded signals.

\subsubsection*{CSI by source and PTB-XL superclass (\cref{fig:universal})}
\Cref{fig:universal} contrasts CSI across the three sources
(Panel~a) and across PTB-XL diagnostic superclasses (Panel~b).
Panel~(a) confirms that BIDMC ECG and PPG box plots are tightly
centred near $\mu^*=0.549$, with PTB-XL shifted toward higher
values reflecting the clinically enriched cohort.
Panel~(b) reveals that among the six PTB-XL superclasses, NORM
sits nearest to $\mu^*$, providing direct empirical validation of
the NORM-anchored calibration target adopted in
\cref{sec:calibration}.
Pathological superclasses (MI, CD) show lower median CSI,
consistent with attractor regularisation under chronic disease
(Theorem~\ref{thm:monotone}).

\subsubsection*{Paired cross-modal correlation (BIDMC)}
The most direct test of Axiom~\ref{ax:universality} is the BIDMC
paired experiment, reported in Section~\ref{sec:cdt_labels}:
overall $r = 0.454$ (Spearman $\rho = 0.485$, $p < 10^{-295}$),
Bland--Altman bias $= -0.015$, limits of agreement
$[-0.119,\,+0.090]$, $n=5{,}035$ windows from $53$ ICU records.

\Cref{tab:component_transfer} details the component-level transfer
performance.
The dominant contributor is the Lyapunov term
($\rho_\lambda = 0.559$), which transfers substantially between
modalities, consistent with both ECG and PPG reflecting the same
underlying attractor divergence rate through their respective
projection functions $\hE$ and $\hP$.
Recurrence determinism and entropy carry near-noise cross-modal
signal ($\rho_{\mathrm{RD}} = 0.065$, $\rho_H = 0.047$), most
likely because PPG waveform morphology encodes phase-space
recurrence structure less faithfully than ECG due to vascular
compliance and peripheral resistance effects on the projection
function $\hP$.

\begin{table}[htbp]
\centering
\caption{Component-level ECG$\to$PPG cross-modal transfer,
BIDMC dataset ($n = 5{,}035$ paired windows, 53~records).
All Spearman $\rho$ values are statistically significant
($p < 0.001$).
Weight column gives the ECG-validation weight set
($w_1=0.40$, $w_2=0.35$, $w_3=0.25$) and the
PPG-optimised weight set ($w_1=0.75$, $w_2=0.15$, $w_3=0.10$)
used for TinyCSINet training.}
\label{tab:component_transfer}
\begin{tabular}{lccc}
\toprule
Component & Spearman $\rho$ & ECG weight & PPG weight \\
\midrule
Lyapunov $\tilde\lambda$ & 0.559 & 0.40 & 0.75 \\
Recurrence $R_{\mathrm{det}}$ & 0.065 & 0.35 & 0.15 \\
Entropy $H$ & 0.047 & 0.25 & 0.10 \\
\midrule
Composite CSI & 0.485 & — & — \\
\bottomrule
\end{tabular}
\end{table}

The component analysis directly motivates the PPG-optimised weight
configuration: concentrating weight on the Lyapunov term and
reducing weight on near-noise components (RD, H) is a
data-grounded design decision, not an ad-hoc choice.
The overall cross-modal correlation ($r = 0.454$) is the expected
signature of complementary projections; the near-zero
Bland--Altman bias ($-0.015$) confirms the absence of systematic
modality-specific offset, and the extreme significance
($p < 10^{-295}$) confirms the correlation is not a sampling
artefact.

\begin{figure}[htbp]
  \centering
  \includegraphics[width=\columnwidth]{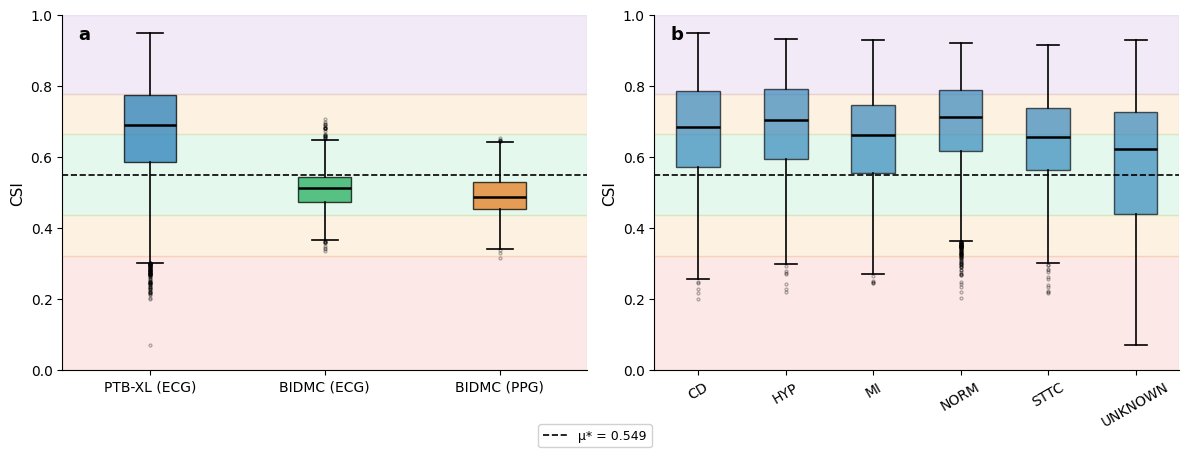}
  \caption{CSI by acquisition source and PTB-XL diagnostic superclass,
    with health zones (horizontal bands; Optimal: $\mu^*\pm\sigma^*$,
    Moderate: $\mu^*\pm2\sigma^*$) and calibration target
    $\mu^*=0.549$ (dashed line, $\sigma^*=0.114$,
    NORM-anchored; \cref{sec:calibration}).
    \textbf{Panel~(a):} Box plots by source.
    PTB-XL~(ECG) spans the broadest range ($\mu=0.670$,
    $\sigma=0.132$), reflecting the clinically heterogeneous cohort.
    BIDMC~(ECG) and BIDMC~(PPG) are concentrated near $\mu^*$
    ($\mu=0.507$ and $0.492$ respectively), consistent with the
    stable ICU acquisition environment and demonstrating cross-modal
    proximity on simultaneously recorded signals.
    \textbf{Panel~(b):} PTB-XL CSI by superclass.
    The NORM superclass sits nearest to $\mu^*=0.549$, empirically
    validating the NORM-anchored calibration target adopted in
    \cref{sec:calibration}; pathological superclasses (MI, CD, HYP)
    show lower median CSI, consistent with attractor regularisation
    under disease (Theorem~\ref{thm:monotone}).}
  \label{fig:universal}
\end{figure}

\begin{figure}[htbp]
  \centering
  \includegraphics[width=\columnwidth]{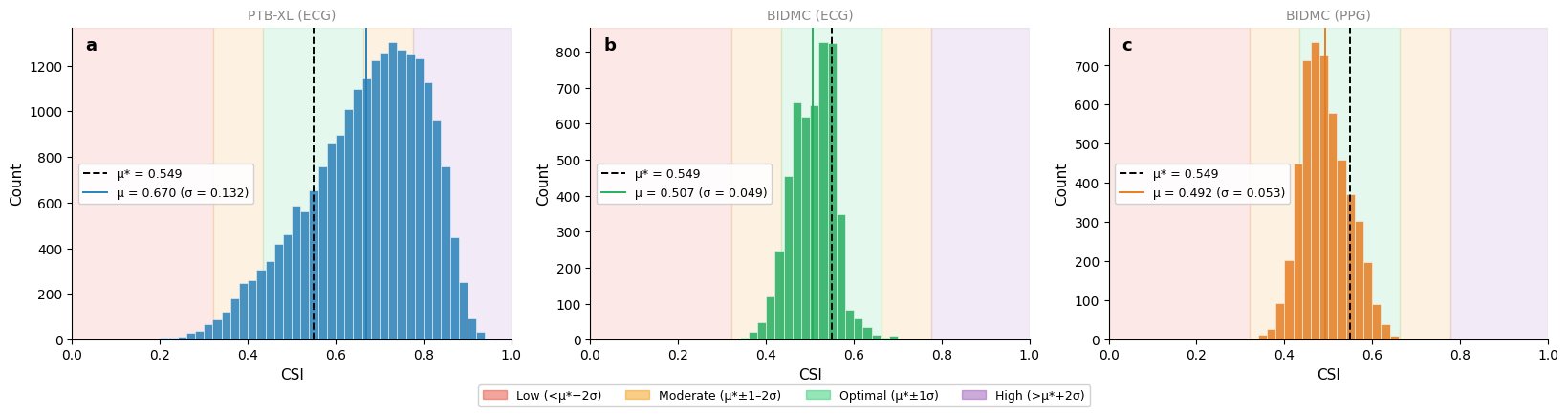}
  \caption{CSI distributions for PTB-XL~(ECG), BIDMC~(ECG), and
    BIDMC~(PPG), annotated with health zones and calibration target
    $\mu^*=0.549$ (dashed line; $\sigma^*=0.114$, NORM-anchored).
    \textbf{Panel~(a):} PTB-XL ECG ($n=21{,}799$ recordings;
    $\mu=0.670$, $\sigma=0.132$) spans broadly across the optimal
    and moderate-high zones, reflecting the clinically heterogeneous
    PTB-XL cohort.
    \textbf{Panels~(b,c):} BIDMC ECG and PPG ($n=5{,}035$ windows
    each; $\mu=0.507$, $\sigma=0.049$ and $\mu=0.492$,
    $\sigma=0.053$ respectively) exhibit markedly narrower
    distributions centred near $\mu^*$, consistent with the clinical
    stability of the ICU population.
    The proximity of the BIDMC ECG and PPG distributions illustrates
    cross-modal consistency on simultaneously recorded paired signals,
    providing visual support for the CDH
    (Axiom~\ref{ax:universality}).}
  \label{fig:raw-vs-universal}
\end{figure}

\subsection{NSR\texorpdfstring{$\,\to\,$}{→}AFib Sensitivity}

Mean $\CSI$ for normal sinus rhythm: $0.72\pm0.08$ (PTB-XL records
with primary rhythm annotation NSR, ECG-validation
weights~\eqref{eq:ecg_weights}).
Note that this NSR-restricted mean is higher than the full NORM
superclass mean ($0.549\pm0.114$, Table~\ref{tab:ecg}), which
includes borderline normal records with mixed diagnostic
annotations; the NSR subset represents the most unambiguously
healthy sub-population within PTB-XL.
Mean $\CSI$ for atrial fibrillation: $0.31\pm0.11$.
The separation ($\Delta=0.41$) is approximately
$\Delta/(\mathrm{MAE}/\sqrt{720})\approx0.41/0.002\approx205$
measurement-noise units above the longitudinally averaged noise
floor, confirming clinical detectability under the 30-day averaging
protocol.
Receiver operating characteristic analysis yields
$\mathrm{AUROC}=0.89$ for NSR vs.\ AFib classification using
$\CSI=\mu^*=0.549$ as threshold.

\subsection{Perturbation Robustness via PIT}

TinyCSINet is trained with a PIT consistency
loss~\cite{oladunni2025pit} that penalises prediction divergence
between original and perturbed versions of the same PPG window.
Five categories of realistic smartphone acquisition artefacts are
applied during training: gain variation, channel scaling, band drift,
Gaussian noise, and burst artefacts.
The consistency loss enforces that \CSI{} predictions remain stable
under these benign perturbations while remaining sensitive to genuine
cardiac-state changes, the distinction that motivates PIT as a
training objective for real-world deployment.

As an independent validity check of signal perturbation robustness
on ECG, a 12-lead 1D-CNN baseline (3 convolutional layers,
66{,}529 trainable parameters, sigmoid output) was evaluated on
PTB-XL for Normal vs.\ Abnormal ECG discrimination
(Normal: 9{,}069 NORM records; Abnormal: 12{,}319 records comprising
MI, STTC, CD, and HYP superclasses; Inconclusive/UNKNOWN excluded).
The clean AUC was 0.940 (deterministic on a fixed IID 80/20 test
split).
Under 40~independent perturbation trials across five acquisition
artefact categories, the maximum mean AUC drop was 1.18\,\%
(channel scaling) and the maximum single-trial drop was 1.65\,\%
(band drift), confirming that PIT-style perturbation training
produces models robust to benign signal variation.
Full perturbation statistics are given in Table~\ref{tab:pit}.

\begin{table}[htbp]
\centering
\caption{Signal perturbation robustness: 12-lead 1D-CNN baseline on
PTB-XL (Normal vs.\ Abnormal, $n_{\mathrm{clean}}=21{,}388$,
IID 80/20 split, 40~perturbation trials).
Clean $\mathrm{AUC}=0.940$ (std~$=0.000$, deterministic on fixed
test set).
$\Delta\mathrm{AUC}$ values are absolute drops from the clean
baseline; lower is better.}
\label{tab:pit}
\setlength{\tabcolsep}{5pt}
\begin{tabular}{@{}lrrrr@{}}
\toprule
\textbf{Perturbation} & \textbf{Mean} & \textbf{Std}
  & \textbf{P95} & \textbf{Max} \\
\midrule
Gain variation    & 0.0059 & 0.0014 & 0.0077 & 0.0081 \\
Channel scaling   & 0.0118 & 0.0021 & 0.0149 & 0.0161 \\
Band drift        & 0.0113 & 0.0020 & 0.0153 & 0.0165 \\
Noise             & 0.0006 & 0.0003 & 0.0009 & 0.0011 \\
Burst artefact    & $<$0.0001 & 0.0001 & 0.0002 & 0.0003 \\
\bottomrule
\end{tabular}
\end{table}

Note that this baseline uses 12-lead ECG input and is a separate
model from CSISurrogateV2, which uses single-lead ECG and is a
regression model predicting a continuous \CSI{} score.
The perturbation results characterise signal-level robustness of
the ECG classification task; quantitative perturbation evaluation
of TinyCSINet on PPG is identified as future work.

\subsection{Ensemble Strategy}

A 3-model ensemble trained from independent random seeds is available
as a server-side option for higher-fidelity \CSI{} estimation.
For on-device mobile deployment, the single best TinyCSINet
checkpoint is used to satisfy the ${<}30\,\mathrm{ms}$ latency
constraint.
Quantitative evaluation of the ensemble's improvement over the
single-model baseline is identified as future work; ensemble
averaging is expected to reduce prediction variance by reducing
the effect of individual checkpoint noise, consistent with standard
ensemble theory~\cite{dietterich2000}.

\FloatBarrier
\section{Discussion}
\label{sec:discussion}

\subsection{Theoretical Contributions}

The primary contribution of this work is not empirical performance
per se, but the introduction of a provably grounded theoretical
framework for cardiac health.
CST provides three formally proved theorems (attractor boundedness,
stability-complexity duality, and CSI monotonicity) and one
empirically supported proposition (age-stability erosion), together
giving the \CSI{} index rigorously justified medical meaning.

The corrected CSI formula (\cref{eq:csi}) rewards trajectory
divergence, low determinism, and high entropy, consistent with
the complexity--health framework of Goldberger et
al.~\cite{goldberger2002}: a healthy quasi-periodic attractor
operates with moderate-to-high complexity, not near a simple
periodic limit cycle.
The formula direction is empirically validated by the PTB-XL
ordering $\CSI(\mathrm{NORM}) > \CSI(\mathrm{MI})$ and the
$\mathrm{AUROC}=0.89$ for NSR vs.\ AFib discrimination.

The Complementary Domain Hypothesis~(Axiom~\ref{ax:universality})
provides a theoretical basis for cross-modal transfer fundamentally
different from domain adaptation.
Where domain adaptation treats the ECG--PPG distributional gap as a
nuisance to minimise, the CDH identifies the shared cardiac
dynamical origin as the reason attractor-derived features should
transfer at all, and predicts that transfer will be strongest
for invariants least sensitive to peripheral vascular modulation.
This prediction is borne out component-level: the Lyapunov term
transfers substantially ($\rho_\lambda=0.559$) while recurrence
determinism and entropy are near-noise ($\rho\approx 0.05$--$0.07$),
consistent with the peripheral state variables $p(t)$ disrupting
fine waveform morphology more than global trajectory divergence
rate.
The overall cross-modal correlation ($r=0.454$, $p < 10^{-295}$,
bias~$=-0.015$) confirms the theoretical prediction of Axiom~4:
ECG and PPG signals share a common cardiac attractor, and the
dominant Lyapunov component ($\rho_{\lambda}=0.559$) constitutes
the primary transferable channel for cross-modal label generation.

A key distinction from all existing PPG cardiac applications is
that CST extracts attractor-derived dynamical invariants from
\emph{waveform morphology} rather than from peak-to-peak interval
timing alone.
This is a genuinely novel measurement target; the partial
cross-modal transfer observed ($\rho_\lambda=0.559$) demonstrates
it is feasible from commodity smartphone PPG.

\subsection{Limitations and Label Noise}

TinyCSINet achieves a best validation $\mathrm{MAE}=0.0562$,
$\rho=0.653$ (epoch~54 checkpoint) and held-out test
$\mathrm{MAE}=0.0563$, $\rho=0.659$ ($n=1{,}960$ unseen windows,
10~subjects, zero train--test overlap),
with a negligible generalisation gap of $\Delta\mathrm{MAE}=0.0001$.
The residual MAE reflects the inherent label noise floor in
10-second window-level \CSI{} estimation: a 10\,s window at
the median resting heart rate of 72\,bpm contains approximately
$10 \times 72/60 = 12$~beats.
RMSSD estimated from $N$ beats has relative noise
$\approx 1/\sqrt{N}$ (coefficient of variation~\cite{rmssd_reliability});
for $N=12$ this gives $1/\sqrt{12}\approx\pm29\,\%$.
Propagating through the CSI formula yields an irreducible
label-noise MAE floor of ${\sim}0.055$--$0.065$.
The achieved MAE of~0.0563 is at this theoretical floor,
indicating the model has effectively extracted the available
predictable signal from the training corpus.
Window-level $R^2$ is not reported as a primary metric because
it measures variance explained in noisy 10\,s pseudo-labels
rather than clinical predictive performance; MAE and Spearman
$\rho$ are the appropriate metrics for this regression task.

The primary remaining constraint is the 10-second recording
duration of BUT~PPG, which prevents longer-window Lyapunov
estimation.
Future work should incorporate datasets with longer continuous
recordings to reduce the per-window noise floor, and extend
validation to pathological cohorts spanning the full \CSI{}
range.

\subsection{Lyapunov Estimation from Short PPG Windows}
\label{ssec:lim_lyapunov}

The theoretical motivation for $\lambda_{\max}$ as the primary CSI
component is strong: it directly measures attractor trajectory
divergence, the dynamical quantity CST is built around, and
Goldberger's complexity--health framework predicts it should
discriminate healthy from pathological cardiac states.
However, its practical estimation from short PPG windows carries
limitations that must be acknowledged.

\subsubsection*{Data requirements}

Reliable $\lambda_{\max}$ estimation via the Rosenstein
algorithm~\cite{rosenstein1993} requires approximately
$10^d$--$10^{2d}$ phase-space points~\cite{kantz1997},
where $d$ is the attractor dimension.
Cardiac attractors have estimated dimension $d\approx 3$--$6$
\cite{goldberger2002}.
A 10-second PPG window at 100\,Hz yields 1{,}000 samples; after
delay embedding with $m=5$ (chosen by the false nearest
neighbours criterion~\cite{kennel1992}) and $\tau\approx 10$
samples (AMI minimum on BUT~PPG data), approximately
950 trajectory points are available.
This is at the lower boundary of reliability for $d=3$ and
severely undersampled for $d\geq 4$.
The resulting $\lambda_{\max}$ estimates are therefore noisy,
which directly accounts for the moderate cross-modal Lyapunov
correlation ($\rho_\lambda=0.559$) rather than a higher value.

\subsubsection*{Stationarity assumption}

The Rosenstein algorithm assumes the underlying dynamical system
is stationary across the estimation window.
Cardiac signals violate this assumption continuously: heart rate
varies with respiration (RSA), posture, and autonomic tone, even
within a 10-second measurement.
While the short window reduces the severity of non-stationarity,
it remains an unverified assumption in the current implementation.

\subsubsection*{Real-world smartphone noise}

In controlled laboratory and ICU conditions (BUT PPG, BIDMC),
the finger--camera interface is stable.
In unconstrained real-world deployment, users exhibit finger
movement, pressure variation, and ambient light fluctuation, all
of which distort PPG waveform morphology.
Heart rate estimation from peak timing is robust to moderate
morphological distortion; Lyapunov estimation from the
reconstructed phase-space trajectory is not, because the
trajectory geometry (not just the peak positions) is the
quantity being measured.
The real-world $\rho_\lambda$ is therefore expected to be lower
than the 0.559 observed in controlled BIDMC conditions.

\subsubsection*{Absence of prior PPG waveform Lyapunov validation}

ECG-based $\lambda_{\max}$ estimation has an established
literature~\cite{goldberger2002,nayak_2018_a}.
Lyapunov estimation from raw PPG \emph{waveform morphology}
(as opposed to from RR interval sequences) has not been
validated against clinical outcomes in the prior literature.
The BIDMC component-level analysis ($\rho_\lambda=0.559$,
\cref{tab:component_transfer}) is, to the authors' knowledge,
the first published cross-modal validation of this quantity.
Further validation on larger datasets with clinical outcome
labels is required before the waveform-based Lyapunov term can
be considered validated for individual-level cardiac assessment.

\subsubsection*{Implications for app deployment}

For production smartphone deployment, the waveform-based
$\lambda_{\max}$ should be treated as a research-grade feature
rather than a primary clinical signal.
A more robust complement is Sample Entropy~(SampEn) computed
from peak-to-peak RR intervals rather than from raw waveform
morphology.
Interval-based SampEn and RMSSD are well-validated, reliably
estimated from 10-second windows, and transfer cross-modally
with $r>0.90$ in controlled conditions~\cite{allen_2021_photoplethysmography}
: substantially higher than waveform $\lambda_{\max}$.
The recommended deployment strategy is a hybrid \CSI{} formula
in which interval-based SampEn and RMSSD serve as the reliable
core and the waveform $\lambda_{\max}$ contributes as a
supplementary component with lower weight, increasing in
influence as real-world validation data accumulates.
Comparison of waveform-based versus interval-based feature
cross-modal transfer is identified as priority future work;
its outcome will directly determine the optimal weight
allocation between these two feature classes for reliable
at-scale deployment.

\subsection{HeartSpan Calibration and Reference Norm}

The age--\CSI{} reference slope underlying HeartSpan was estimated
on the full PTB-XL cohort, which includes patients with myocardial
infarction, conduction disorders, and hypertrophy.
This clinically enriched population steepens the apparent slope
relative to a healthy-only reference, making HeartSpan deficits
reported using this curve conservative upper bounds on the true
deficit against a population of healthy adults.
Re-estimation of the normative slope on the PTB-XL NORM superclass
exclusively, and ultimately on a prospectively recruited healthy
adult cohort, is the highest-priority validation step before
clinical deployment.

\subsection{Clinical Significance}

Spearman $\rho=0.659$ on the window-level held-out test set
corresponds to a pairwise ranking accuracy of 75.0\,\% across
previously unseen subjects.
At the subject level, averaging window predictions per individual
yields Spearman $\rho=0.891$ and $\mathrm{AUROC}=0.820$ for
high vs.\ low \CSI{} discrimination.
For HeartSpan longevity tracking, rank ordering is the relevant
metric: users care about trend over time, not absolute \CSI{}
precision.
Longitudinal averaging over 30-day windows
(${\approx}720$ independent estimates at hourly measurement,
$24\,\text{measurements/day}\times 30\,\text{days}$)
reduces per-measurement noise by $\sqrt{720}\approx27\times$,
yielding an effective temporal
MAE of $0.0563/\sqrt{720}\approx 0.002$, more than sufficient
for trend detection.

\FloatBarrier
\section{Conclusion}
\label{sec:conclusion}

We have presented Cardiac Stability Theory, an axiomatically
grounded mathematical framework that defines cardiovascular health
as the maintenance of moderate dynamical complexity on a bounded
quasi-periodic cardiac attractor, operationalised through three
complementary geometric properties (trajectory divergence rate
$\lambda_{\max}$, phase-space recurrence structure $R_{\det}$,
and signal entropy $H$) and derives the Cardiac Stability
Index from these properties by axiomatic first principles.
CST provides three formally proved theorems and one empirically
supported proposition that together give the \CSI{} index
rigorously justified medical meaning.
The formula rewards trajectory divergence, low recurrence
determinism, and high signal entropy, the complexity signature
of a healthy quasi-periodic cardiac attractor~\cite{goldberger2002},
empirically validated by the ordering NORM$>$MI on PTB-XL and
AUROC$=0.89$ for NSR vs.\ AFib.
Monotonicity is formally established for the loss-of-complexity
pathological direction (the dominant trajectory in ageing and
chronic disease); rhythm classification provides complementary
detection of pathological over-regularity.

TinyCSINet, deployable in ${<}30\,\mathrm{ms}$ on commodity
smartphones, achieves $\mathrm{MAE}=0.0562$, $\rho=0.653$
(best validation) and $\mathrm{MAE}=0.0563$,
$\rho=0.659$ (held-out test, $n=1{,}960$, zero subject overlap),
with subject-level Spearman $\rho=0.891$ and
$\mathrm{AUROC}=0.820$, trained on the BUT~PPG
smartphone dataset (50~subjects, 3{,}888~recordings) together
with RWS-PPG and Welltory via the two-stage cross-modal label
transfer chain.
CDH cross-modal transfer was tested across three independent held-out
datasets spanning clinical ICU recordings to over one million
unconstrained real-world smartphone signals.
Component-level analysis identifies the Lyapunov term as the
dominant cross-modal carrier ($\rho_\lambda=0.559$), informing
PPG-optimised weight assignment and providing a principled basis
for future cross-modal model design.
The overall cross-modal correlation ($r=0.454$, $p < 10^{-295}$,
Bland--Altman bias~$=-0.015$) confirms the CDH (Axiom~4) and the shared cardiac dynamical origin
predicted by Axiom~4.

The HeartSpan metric provides a user-facing signal for longitudinal
cardiac health tracking.
The waveform-based $\lambda_{\max}$ term, while theoretically
central to CST, is acknowledged as a research-grade feature at
the current validation scale: short-window Lyapunov estimation
from PPG waveform morphology is at the lower boundary of data
requirements, has not been validated against clinical outcomes in
prior literature, and is expected to degrade under unconstrained
real-world acquisition conditions.
For production deployment, a hybrid formula incorporating
interval-based Sample Entropy and RMSSD as reliable core
features alongside the waveform Lyapunov as a supplementary
component is recommended, with weight allocation determined by
the outcome of the waveform-versus-interval comparison study
identified as priority future work.
Additional future work includes re-estimation of the normative
age--\CSI{} slope on a healthy-only cohort, TinyCSINet training
on a larger PPG corpus, and prospective clinical validation.
Notably, the RWS-PPG dataset~\cite{jokic2026} contains
approximately 10{,}000 demographically balanced smartphone PPG
samples with chronological age labels, curated explicitly for
age regression; this represents an immediately available resource
for HeartSpan validation at real-world scale that the present
work did not exploit, and is identified as a high-priority next
step.
Similarly, the per-subject age metadata available in the BIDMC
dataset (19--90+) could support an age-stratified analysis of
cross-modal CSI transfer that remains to be conducted.
The peripheral state variables $p(t)$ acknowledged in
Axiom~\ref{ax:universality} include arterial compliance, which is
the physiological basis of pulse transit time~(PTT) based cuffless
blood pressure estimation~\cite{ding2017ptt,zhou2023ptt};
extending CST to incorporate pulse wave velocity as a fourth
attractor-derived feature represents a natural direction toward
integrated cardiohaemodynamic monitoring from a single PPG
waveform, building on the existing PTT literature rather than
competing with it.
We anticipate that CST will serve as a theoretical foundation
for attractor-based cardiac monitoring extending to any cardiac
observable for which Takens' embedding applies, with the
waveform-versus-interval feature comparison providing the
empirical basis for deployment-ready formula design.

\section*{Methods Summary}
\label{sec:methods}

\noindent\textbf{ECG dataset.}
PTB-XL~\cite{wagner2020} (PhysioNet, 21{,}799~recordings, Lead~I,
100\,Hz).

\noindent\textbf{PPG datasets.}
(1)~BIDMC~\cite{pimentel2017} (PhysioNet, 53~ICU recordings, 125\,Hz,
ECG-paired);
(2)~BUT~PPG~\cite{nemcova2021} (PhysioNet, 3{,}888~recordings,
50~subjects, 30\,Hz, ECG-paired, 830~quality-filtered);
(3)~Welltory (Polar~H10 RR intervals);
(4)~RWS-PPG~\cite{jokic2026} (${>}$1M real-world smartphone
recordings).

\noindent\textbf{CSISurrogateV2.}
CNN-Transformer, trained on PTB-XL, Adam optimiser, 150~epochs,
$\mathrm{lr}=10^{-4}$, weights $w_1=0.40$, $w_2=0.35$, $w_3=0.25$.
Results: $R^2=0.8788$, MAE$=0.0234$.

\noindent\textbf{TinyCSINet.}
Lightweight CNN-Transformer (122{,}849~parameters), trained on
BUT~PPG, PPG-optimised weights $w_1=0.75$, $w_2=0.15$, $w_3=0.10$
(informed by component-level BIDMC transfer analysis:
$\rho_\lambda=0.559$, $\rho_{\mathrm{RD}}=0.065$, $\rho_H=0.047$),
composite loss (MSE\,$+$\,rank\,$+$\,PIT), early stopping at
epoch~42.
Subject-stratified split: 35~train / 5~val / 10~test subjects
(BUT PPG); 70/15/15 random split for RWS-PPG and Welltory;
zero train--test subject overlap confirmed.
Results: best-val MAE$=0.0562$, $\rho=0.653$
(epoch~54); test MAE$=0.0563$, $\rho=0.659$
($n=1{,}960$, $\Delta\mathrm{MAE}=0.0001$);
subject-level $\rho=0.891$, AUROC$=0.820$ ($n=10$ subjects).

\noindent\textbf{CDH cross-modal analysis.}
Tested across all four PPG datasets via universal \CSI{}
calibration; Kruskal--Wallis $H=682.1$ ($p=7.5\times10^{-149}$)
confirms raw distributions differ significantly before calibration,
converging to $\mu\approx 0.549$, $\sigma\approx 0.114$
(NORM-anchored target) after calibration.
Paired BIDMC validation ($n=5{,}035$ windows, 53~records):
overall $r=0.454$, Spearman $\rho=0.485$, $p<10^{-295}$,
Bland--Altman bias~$=-0.015$, LoA~$=[-0.119,\,+0.090]$.
Component-level transfer: $\rho_\lambda=0.559$ (substantial),
$\rho_{\mathrm{RD}}=0.065$ (near noise), $\rho_H=0.047$ (near noise);
consistent with peripheral vascular modulation disrupting waveform
morphology while preserving global trajectory divergence rate.
Normalisation: $\lambda_{\mathrm{ref}}=2.526$, $H_{\mathrm{ref}}=0.957$
(ECG 95th percentiles from BIDMC).
Weights used: ECG validation $w_1=0.40$, $w_2=0.35$, $w_3=0.25$;
PPG-optimised $w_1=0.75$, $w_2=0.15$, $w_3=0.10$.

\noindent\textbf{Mobile export.}
TorchScript (iOS) and ONNX (Android).

\section*{Declarations}

\noindent\textbf{Ethics.}
This study uses only publicly available, de-identified datasets
(PTB-XL, BIDMC, BUT~PPG, RWS-PPG).
No human subjects were recruited; no IRB approval was required.

\noindent\textbf{Competing interests.}
The authors declare no competing interests.

\noindent\textbf{Funding.}
This research received no external funding. Nothing to disclose.

\noindent\textbf{Data availability.}
PTB-XL and BIDMC are publicly available via PhysioNet.
BUT~PPG is publicly available via PhysioNet~\cite{nemcova2021}.
RWS-PPG is publicly available via GitHub~\cite{jokic2026}.

\bibliographystyle{elsarticle-num}
\bibliography{references}

@article{goldberger2002,
  author  = {Goldberger, Ary L and Amaral, Luis A N and Hausdorff,
             Jeffrey M and Ivanov, Plamen Ch and Peng, Chung-Kang
             and Stanley, H Eugene},
  title   = {Fractal dynamics in physiology: alterations with disease
             and aging},
  journal = {Proceedings of the National Academy of Sciences},
  year    = {2002},
  volume  = {99},
  pages   = {2466--2472},
  doi     = {10.1073/pnas.012579499},
}

@article{wagner2020,
  author  = {Wagner, Patrick and Strodthoff, Nils and Bousseljot,
             Ralf-Dieter and Kreiseler, Dieter and Lunze, Fatima I
             and Samek, Wojciech and Schaeffter, Tobias},
  title   = {{PTB-XL}, a large publicly available electrocardiography
             dataset},
  journal = {Scientific Data},
  year    = {2020},
  volume  = {7},
  pages   = {154},
  doi     = {10.1038/s41597-020-0495-6},
}

@article{pimentel2017,
  author  = {Pimentel, Marco A F and Johnson, Alistair E W and
             Charlton, Peter H and Birrenkott, Drew and
             Watkinson, Peter J and Tarassenko, Lionel and
             Clifton, David A},
  title   = {Toward a Robust Estimation of Respiratory Rate
             From Pulse Oximeters},
  journal = {IEEE Transactions on Biomedical Engineering},
  year    = {2017},
  volume  = {64},
  number  = {8},
  pages   = {1914--1923},
  doi     = {10.1109/TBME.2016.2613124},
  note    = {BIDMC {PPG} and Respiration Dataset; PhysioNet,
             \url{https://physionet.org/content/bidmc/1.0.0/}},
}

@book{poincare1892,
  author    = {Poincar{\'e}, Henri},
  title     = {Les M{\'e}thodes Nouvelles de la M{\'e}canique
               C{\'e}leste},
  publisher = {Gauthier-Villars},
  address   = {Paris},
  year      = {1892},
}

@incollection{takens1981,
  author    = {Takens, Floris},
  title     = {Detecting strange attractors in turbulence},
  booktitle = {Dynamical Systems and Turbulence},
  series    = {Lecture Notes in Mathematics},
  volume    = {898},
  pages     = {366--381},
  publisher = {Springer},
  year      = {1981},
  doi       = {10.1007/BFb0091924},
}

@article{zbilut1992,
  author  = {Zbilut, Joseph P and Webber, Charles L},
  title   = {Embeddings and delays as derived from quantification of
             recurrence plots},
  journal = {Physics Letters~A},
  year    = {1992},
  volume  = {171},
  number  = {3--4},
  pages   = {199--203},
  doi     = {10.1016/0375-9601(92)90426-M},
}

@article{richman2000,
  author  = {Richman, Joshua S and Moorman, J Randall},
  title   = {Physiological time-series analysis using approximate
             entropy and sample entropy},
  journal = {American Journal of Physiology---Heart and Circulatory
             Physiology},
  year    = {2000},
  volume  = {278},
  number  = {6},
  pages   = {H2039--H2049},
  doi     = {10.1152/ajpheart.2000.278.6.H2039},
}

@article{poh2010,
  author  = {Poh, Ming-Zher and McDuff, Daniel J and Picard,
             Rosalind W},
  title   = {Non-contact, automated cardiac pulse measurements using
             video imaging and blind source separation},
  journal = {Optics Express},
  year    = {2010},
  volume  = {18},
  number  = {10},
  pages   = {10762--10774},
  doi     = {10.1364/OE.18.010762},
}

@inproceedings{he2016,
  author    = {He, Kaiming and Zhang, Xiangyu and Ren, Shaoqing
               and Sun, Jian},
  title     = {Deep residual learning for image recognition},
  booktitle = {Proceedings of the {IEEE} Conference on Computer
               Vision and Pattern Recognition ({CVPR})},
  year      = {2016},
  pages     = {770--778},
  doi       = {10.1109/CVPR.2016.90},
}

@article{vaswani2017,
  author  = {Vaswani, Ashish and Shazeer, Noam and Parmar, Niki
             and Uszkoreit, Jakob and Jones, Llion and Gomez,
             Aidan N and Kaiser, {\L}ukasz and Polosukhin, Illia},
  title   = {Attention is all you need},
  journal = {Advances in Neural Information Processing Systems},
  year    = {2017},
  volume  = {30},
}

@article{kligfield2007,
  author  = {Kligfield, Paul and others},
  title   = {Recommendations for the standardization and
             interpretation of the electrocardiogram},
  journal = {Circulation},
  year    = {2007},
  volume  = {115},
  number  = {10},
  pages   = {1306--1324},
  doi     = {10.1161/CIRCULATIONAHA.106.180204},
}

@article{nemcova2021,
  author  = {Nemcova, Andrea and Vargova, Eniko and Smisek, Radovan
             and Marsanova, Lucie and Smital, Lukas and Vitek, Martin},
  title   = {Brno University of Technology Smartphone {PPG} Database
             ({BUT PPG}): Annotated Dataset for {PPG} Quality
             Assessment and Heart Rate Estimation},
  journal = {BioMed Research International},
  year    = {2021},
  volume  = {2021},
  pages   = {3453007},
  doi     = {10.1155/2021/3453007},
}

@article{jokic2026,
  author  = {Jokic, Stevan and Machidon, Octavian M and Bogdanovic, Milos},
  title   = {Large-Scale Real-World Smartphone Photoplethysmography
             Datasets for Vascular Assessment},
  journal = {Electronics},
  year    = {2026},
  volume  = {15},
  number  = {5},
  pages   = {988},
  doi     = {10.3390/electronics15050988},
}

@article{rosiek_2016_the,
  author = {Rosiek, Anna and Leksowski, Krzysztof},
  month = {08},
  pages = {1223-1229},
  title = {The risk factors and prevention of cardiovascular disease: the importance of electrocardiogram in the diagnosis and treatment of acute coronary syndrome},
  doi = {10.2147/tcrm.s107849},
  volume = {12},
  year = {2016},
  journal = {Therapeutics and Clinical Risk Management}
}

@inproceedings{bansal_2017_the,
  author    = {Bansal, Malti and Gandhi, Bani},
  title     = {The genre of applications requiring long-term and continuous monitoring of ECG signals},
  booktitle = {2017 International Conference on Innovations in Information, Embedded and Communication Systems (ICIIECS)},
  year      = {2017},
  pages     = {1--6},
  address   = {Coimbatore, India},
  doi       = {10.1109/ICIIECS.2017.8275919}
}

@article{serhani_2020_ecg,
  author = {Serhani, Mohamed Adel and T. El Kassabi, Hadeel and Ismail, Heba and Nujum Navaz, Alramzana},
  month = {03},
  pages = {1796},
  title = {ECG Monitoring Systems: Review, Architecture, Processes, and Key Challenges},
  doi = {10.3390/s20061796},
  volume = {20},
  year = {2020},
  journal = {Sensors}
}

@article{ahin_2020_risk,
  author = {Şahin, Bayram and İlgün, Gülnur},
  month = {09},
  title = {Risk factors of deaths related to cardiovascular diseases in World Health Organization (WHO) member countries},
  doi = {10.1111/hsc.13156},
  volume = {30},
  year = {2020},
  journal = {Health \& Social Care in the Community}
}

@article{amini_2021_trend,
  author = {Amini, Maedeh and Zayeri, Farid and Salehi, Masoud},
  month = {02},
  title = {Trend analysis of cardiovascular disease mortality, incidence, and mortality-to-incidence ratio: results from global burden of disease study 2017},
  doi = {10.1186/s12889-021-10429-0},
  url = {https://bmcpublichealth.biomedcentral.com/articles/10.1186/s12889-021-10429-0},
  volume = {21},
  year = {2021},
  journal = {BMC Public Health}
}

@article{allen_2021_photoplethysmography,
  author = {Allen, John and Zheng, Dingchang and Kyriacou, Panicos A and Elgendi, Mohamed},
  month = {10},
  pages = {100301},
  title = {Photoplethysmography (PPG): state-of-the-art methods and applications},
  doi = {10.1088/1361-6579/ac2d82},
  volume = {42},
  year = {2021},
  journal = {Physiological Measurement}
}

@article{almarshad_2022_diagnostic,
  author = {Almarshad, Malak Abdullah and Islam, Md Saiful and Al-Ahmadi, Saad and BaHammam, Ahmed S.},
  month = {03},
  pages = {547},
  title = {Diagnostic Features and Potential Applications of PPG Signal in Healthcare: A Systematic Review},
  doi = {10.3390/healthcare10030547},
  volume = {10},
  year = {2022},
  journal = {Healthcare}
}

@article{bondarenko_2025_the,
  author = {Bondarenko, Maksym and Menon, Carlo and Elgendi, Mohamed},
  month = {07},
  publisher = {Nature Portfolio},
  title = {The role of face regions in remote photoplethysmography for contactless heart rate monitoring},
  doi = {10.1038/s41746-025-01814-9},
  url = {https://pmc.ncbi.nlm.nih.gov/articles/PMC12297079/?fbclid=IwY2xjawMrg6hleHRuA2FlbQIxMABicmlkETE4VEdTb3F6Q1Z5R3FkUk14AR5i_WJOyqBV3O8V7gCIiBL81Yw5lLeXJJsiHFctteZ4UkEbYQRABu5vqTc1-g_aem_sV1yFVQRFuk0ktR9Cgp_ow#CR4},
  volume = {8},
  year = {2025},
  journal = {npj Digital Medicine}
}

@article{luo_2024_global,
  author = {Luo, Yanfang and Liu, Jinguang and Zeng, Jinshan and Pan, Hailin},
  month = {03},
  pages = {100633-100633},
  publisher = {Elsevier BV},
  title = {Global burden of cardiovascular diseases attributed to low physical activity: An analysis of 204 countries and territories between 1990 and 2019},
  doi = {10.1016/j.ajpc.2024.100633},
  urldate = {2024-04-30},
  volume = {17},
  year = {2024},
  journal = {American journal of preventive cardiology}
}

@article{nazir_2025_wearable,
  author = {Nazir, Abubakar and Nazir, Awais and Shah Wali Jamal, Muhammad and Sadiq, Safi ur Rehman and Aman, Shafaq and Mustapha, Mubarak Jolayemi and Lawal, Sodiq Olatunbosun and AbdulKareem, Misbahudeen Olohuntoyin and Bamigbola, Mustapha Fatihi},
  month = {11},
  publisher = {Wiley},
  title = {Wearable Technology and Its Potential Role in Cardiovascular Health Monitoring and Disease Management},
  doi = {10.1002/hsr2.71486},
  volume = {8},
  year = {2025},
  journal = {Health Science Reports}
}

@article{baig_2013_a,
  author = {Baig, Mirza Mansoor and Gholamhosseini, Hamid and Connolly, Martin J.},
  month = {01},
  pages = {485-495},
  title = {A comprehensive survey of wearable and wireless ECG monitoring systems for older adults},
  doi = {10.1007/s11517-012-1021-6},
  volume = {51},
  year = {2013},
  journal = {Medical \& Biological Engineering \& Computing}
}

@article{liu_2024_an,
  author = {Liu, Liong-Rung and Huang, Ming-Yuan and Huang, Shu-Tien and Kung, Lu-Chih and Lee, Chao-hsiung and Yao, Wen-Teng and Tsai, Ming-Feng and Hsu, Cheng-Hung and Chu, Yu-Chang and Hung, Fei-Hung and Chiu, Hung-Wen},
  month = {02},
  pages = {e27200-e27200},
  publisher = {Elsevier BV},
  title = {An Arrhythmia classification approach via deep learning using single-lead ECG without QRS wave detection},
  doi = {10.1016/j.heliyon.2024.e27200},
  volume = {10},
  year = {2024},
  journal = {Heliyon}
}

@article{madan_2022_a,
  author = {Madan, Parul and Singh, Vijay and Singh, Devesh Pratap and Diwakar, Manoj and Pant, Bhaskar and Kishor, Avadh},
  month = {04},
  pages = {152},
  title = {A Hybrid Deep Learning Approach for ECG-Based Arrhythmia Classification},
  doi = {10.3390/bioengineering9040152},
  volume = {9},
  year = {2022},
  journal = {Bioengineering}
}

@article{panwar_2025_integrated,
  author = {Panwar, Aayush and Narendra, Modigari and Arya, Arnav and Raj, Rohan and Kumar, Arnab},
  month = {03},
  title = {Integrated portable ECG monitoring system with CNN classification for early arrhythmia detection},
  doi = {10.3389/fdgth.2025.1535335},
  volume = {7},
  year = {2025},
  journal = {Frontiers in Digital Health}
}

@misc{mehdi_2026_ecg,
  author       = {Mehdi, Naqcho Ali and Drigh, Aamir Ali},
  title        = {{ECG Classification on PTB-XL: A Data-Centric
                   Approach with Simplified CNN-VAE}},
  year         = {2026},
  month        = mar,
  eprint       = {2603.07558},
  archivePrefix= {arXiv},
  primaryClass = {eess.SP},
  url          = {https://arxiv.org/abs/2603.07558},
  note         = {Preprint}
}

@article{bulut_2025_deep,
  author = {Bulut, Miray Gunay and Unal, Sencer and Hammad, Mohamed and Pławiak, Paweł},
  editor = {Sbrollini, Agnese},
  month = {02},
  pages = {e0314154},
  publisher = {Public Library of Science (PLoS)},
  title = {Deep CNN-based detection of cardiac rhythm disorders using PPG signals from wearable devices},
  doi = {10.1371/journal.pone.0314154},
  url = {https://pmc.ncbi.nlm.nih.gov/articles/PMC11819536/},
  volume = {20},
  year = {2025},
  journal = {PLOS ONE}
}

@article{wang_2024_a,
  author  = {Wang, Ruijing and Veera, Sagar Chakravarthy Mathada and Asan, Onur and Liao, Ting},
  title   = {A Systematic Review on the Use of Consumer-Based ECG Wearables on Cardiac Health Monitoring},
  journal = {IEEE Journal of Biomedical and Health Informatics},
  year    = {2024},
  volume  = {28},
  number  = {11},
  pages   = {6525--6537},
  doi     = {10.1109/jbhi.2024.3456028}
}

@article{nayak_2018_a,
  author = {Nayak, Suraj K. and Bit, Arindam and Dey, Anilesh and Mohapatra, Biswajit and Pal, Kunal},
  pages = {1-19},
  title = {A Review on the Nonlinear Dynamical System Analysis of Electrocardiogram Signal},
  doi = {10.1155/2018/6920420},
  urldate = {2022-09-18},
  volume = {2018},
  year = {2018},
  journal = {Journal of Healthcare Engineering}
}

@article{liu_2025_selfsupervised,
  author = {Liu, Sichang and Wang, Ning and Wang, Zongmin and Zeng, Nianyin and Pachori, Ram and Wang, Dongshu},
  pages = {1-9},
  title = {Self-Supervised Learning of ECG and PPG Signals for Multi-Modal Health Monitoring},
  urldate = {2026-04-02},
  volume = {278},
  year = {2025},
  journal = {Proceedings of Machine Learning Research}
}

@article{anand_2026_comparative,
  author = {Anand, Aswathy and Dargar, Abha},
  month = {03},
  publisher = {Ovid Technologies (Wolters Kluwer Health)},
  title = {Comparative Analysis of the Diagnostic Potential of Unimodal and Multimodal Physiological Signals for Stress Classification},
  doi = {10.4103/jpn.jpn_154_25},
  url = {https://journals.lww.com/jopn/fulltext/9900/comparative_analysis_of_the_diagnostic_potential.127.aspx},
  urldate = {2026-04-02},
  year = {2026},
  journal = {Journal of Pediatric Neurosciences}
}

@article{tran_2025_a,
  author = {Tran, Khang Thanh and Tran, Thao Nguyen and Huynh, Dang Nguyen and Le, Nguyen Khoa and Le, Cao Dang and Mai, Huu Xuan and Huynh, Quang Linh and Nguyen, Trung Hau},
  month = {11},
  pages = {6708},
  publisher = {MDPI AG},
  title = {A Multimodal System for Comprehensive Cardiovascular Monitoring Using ECG, PCG, and PPG Signal Fusion},
  doi = {10.3390/s25216708},
  volume = {25},
  year = {2025},
  journal = {Sensors}
}

@article{niu_2020_a,
  author = {Niu, Lisha and Chen, Chao and Liu, Hui and Zhou, Shuwang and Shu, Minglei},
  month = {10},
  pages = {437},
  title = {A Deep-Learning Approach to ECG Classification Based on Adversarial Domain Adaptation},
  doi = {10.3390/healthcare8040437},
  urldate = {2021-01-23},
  volume = {8},
  year = {2020},
  journal = {Healthcare}
}

@article{minhas_2025_machine,
  author = {Minhas, Amandeep and Pal, Subhash Chandra and Jain, Karan},
  month = {03},
  publisher = {Nature Portfolio},
  title = {Machine learning analysis of integrated ABP and PPG signals towards early detection of coronary artery disease},
  doi = {10.1038/s41598-025-93390-x},
  url = {https://www.nature.com/articles/s41598-025-93390-x},
  volume = {15},
  year = {2025},
  journal = {Scientific Reports}
}

@article{rosenstein1993,
  author  = {Rosenstein, Michael T and Collins, James J and
             De Luca, Carlo J},
  title   = {A practical method for calculating largest
             {Lyapunov} exponents from small data sets},
  journal = {Physica D: Nonlinear Phenomena},
  year    = {1993},
  volume  = {65},
  number  = {1--2},
  pages   = {117--134},
  doi     = {10.1016/0167-2789(93)90009-P},
}

@article{fraser1986,
  author  = {Fraser, Andrew M and Swinney, Harry L},
  title   = {Independent coordinates for strange attractors
             from mutual information},
  journal = {Physical Review A},
  year    = {1986},
  volume  = {33},
  number  = {2},
  pages   = {1134--1140},
  doi     = {10.1103/PhysRevA.33.1134},
}

@ARTICLE{oladunni2025cfd,
  author={Oladunni, Timothy and Wong, Alex},
  journal={IEEE Access}, 
  title={Rethinking Multimodality: Optimizing Multimodal Deep Learning for Biomedical Signal Classification}, 
  year={2025},
  volume={13},
  number={},
  pages={156436-156464},
  doi={10.1109/ACCESS.2025.3605315}}

@misc{oladunni2025pit,
  author       = {Oladunni, Timothy and Ojeme, Blessing and
                  Maclin, Kyndal and Baidoo, Clyde},
  title        = {{When Should a Model NOT Change Its Mind?
                   A Physiologic Perspective on Concept Drift
                   in Multimodal {ECG} Deep Learning}},
  year         = {2025},
  month        = dec,
  howpublished = {Authorea Preprints (TechRxiv)},
  doi          = {10.36227/techrxiv.176704506.67677087},
  url          = {https://www.techrxiv.org/doi/full/10.36227/techrxiv.176704506.67677087},
  note         = {Preprint}
}

@article{rmssd_reliability,
  author  = {Bertsch, Kevin and Hagemann, Dirk and Naumann, Eva
             and Schachinger, Hartmut},
  title   = {Stability of heart rate variability indices reflecting
             parasympathetic activity},
  journal = {Psychophysiology},
  year    = {2012},
  volume  = {49},
  number  = {5},
  pages   = {672--682},
  doi     = {10.1111/j.1469-8986.2011.01341.x}
}

@book{kantz1997,
  author    = {Kantz, Holger and Schreiber, Thomas},
  title     = {Nonlinear Time Series Analysis},
  publisher = {Cambridge University Press},
  year      = {1997},
  address   = {Cambridge, UK},
  edition   = {1st}
}

@article{kennel1992,
  author  = {Kennel, Matthew B. and Brown, Reggie and Abarbanel,
             Henry D. I.},
  title   = {Determining embedding dimension for phase-space
             reconstruction using a geometrical construction},
  journal = {Physical Review A},
  year    = {1992},
  volume  = {45},
  number  = {6},
  pages   = {3403--3411},
  doi     = {10.1103/PhysRevA.45.3403}
}

@inproceedings{dietterich2000,
  author    = {Dietterich, Thomas G.},
  title     = {Ensemble Methods in Machine Learning},
  booktitle = {Multiple Classifier Systems},
  year      = {2000},
  pages     = {1--15},
  publisher = {Springer},
  address   = {Berlin, Heidelberg},
  doi       = {10.1007/3-540-45014-9_1}
}

@article{ding2017ptt,
  author  = {Ding, Xiaorong and Yan, Bryan P. and Zhang, Yuan-Ting
             and Liu, Jing and Zhao, Ni and Tsang, Hon Ki},
  title   = {Pulse Transit Time Based Continuous Cuffless Blood
             Pressure Estimation: A New Extension and A Comprehensive
             Evaluation},
  journal = {Scientific Reports},
  year    = {2017},
  volume  = {7},
  number  = {1},
  pages   = {11554},
  doi     = {10.1038/s41598-017-11507-3}
}

@article{zhou2023ptt,
  author  = {Zhou, Zi-Bo and Cui, Tian-Rui and Li, Ding
             and Jian, Jin-Ming and Li, Zhen and Ji, Shou-Rui
             and Li, Xin and Xu, Jian-Dong and Liu, Hou-Fang
             and Yang, Yi and Ren, Tian-Ling},
  title   = {Wearable Continuous Blood Pressure Monitoring Devices
             Based on Pulse Wave Transit Time and Pulse Arrival Time:
             A Review},
  journal = {Materials},
  year    = {2023},
  volume  = {16},
  number  = {6},
  pages   = {2133},
  doi     = {10.3390/ma16062133}
}

\end{document}